\documentclass[twoside,11pt]{article}

% Any additional packages needed should be included after jmlr2e.
% Note that jmlr2e.sty includes epsfig, amssymb, natbib and graphicx,
% and defines many common macros, such as 'proof' and 'example'.
%
% It also sets the bibliographystyle to plainnat; for more information on
% natbib citation styles, see the natbib documentation, a copy of which
% is archived at http://www.jmlr.org/format/natbib.pdf

% Available options for package jmlr2e are:
%
%   - abbrvbib : use abbrvnat for the bibliography style
%   - nohyperref : do not load the hyperref package
%   - preprint : remove JMLR specific information from the template,
%         useful for example for posting to preprint servers.
%
% Example of using the package with custom options:
%
% \usepackage[abbrvbib, preprint]{jmlr2e}

% \usepackage[abbrvbib, preprint]{jmlr2e}
\usepackage[abbrvbib]{jmlr2e}

%%% START me
\usepackage{graphicx}
\usepackage{amsmath}
\usepackage{hyperref}
\usepackage{bm}
\usepackage{amsfonts}
\usepackage{amssymb}
\usepackage{mathtools}
\usepackage{centernot}
\usepackage{enumerate}
\usepackage{algpseudocode}  
\usepackage{algorithm}
\usepackage{qtree}
\usepackage{stfloats} % figures on bottom

\usepackage[T1]{fontenc}    % use 8-bit T1 fonts
\usepackage{url}            % simple URL typesetting
\usepackage{booktabs}       % professional-quality tables
\usepackage{nicefrac}       % compact symbols for 1/2, etc.
\usepackage{microtype}      % microtypography
\usepackage{multirow}       % for multiple rows on tables

\algnewcommand{\LeftComment}[1]{\Statex \(\triangleright\) #1}
  %for independent
                %for independent
 % also for independent and not independent

%%% END

% Definitions of handy macros can go here

% Heading arguments are {volume}{year}{pages}{date submitted}{date published}{paper id}{author-full-names}

\jmlrheading{}{2020}{}{}{}{}{Jonathan D. Young, Bryan Andrews, Gregory F. Cooper, and Xinghua Lu}

% Short headings should be running head and authors last names

\ShortHeadings{Learning Latent Causal Structures with a RINN}{Young, Andrews, Cooper, and Lu}
\firstpageno{1}

\begin{document}

\title{Learning Latent Causal Structures with a Redundant Input Neural Network}

\author{\name Jonathan D. Young \email jdy10@pitt.edu \\
       \addr Intelligent Systems Program\\
       University of Pittsburgh\\
       Pittsburgh, PA 15206, USA
       \AND
       \name Bryan Andrews \email bja43@pitt.edu \\
       \addr Intelligent Systems Program\\
       University of Pittsburgh
       \AND
       \name Gregory F. Cooper \email gfc@pitt.edu \\
       \addr Department of Biomedical Informatics\\
       University of Pittsburgh
       \AND
       \name Xinghua Lu \email xinghua@pitt.edu \\
       \addr Department of Biomedical Informatics\\
       University of Pittsburgh}

\editor{Thuc Duy Le, Lin Liu, Kun Zhang, Emre K{\i}c{\i}man, Peng Cui, and Aapo Hyv{\"a}rinen}

\maketitle

\begin{abstract}%   <- trailing '%' for backward compatibility of .sty file
Most causal discovery algorithms find causal structure among a set of observed variables.  Learning the causal structure among latent variables remains an important open problem, particularly when using high-dimensional data.  In this paper, we address a problem for which it is known that inputs \textit{cause} outputs, and these causal relationships are encoded by a causal network among a set of an unknown number of latent variables.  We developed a deep learning model, which we call a redundant input neural network (RINN), with a modified architecture and a regularized objective function to find causal relationships between input, hidden, and output variables.  More specifically, our model allows input variables to directly interact with all latent variables in a neural network to influence what information the latent variables should encode in order to generate the output variables accurately.  In this setting, the direct connections between input and latent variables makes the latent variables partially interpretable; furthermore, the connectivity among the latent variables in the neural network serves to model their potential causal relationships to each other and to the output variables.  A series of simulation experiments provide support that the RINN method can successfully recover latent causal structure between input and output variables.
\end{abstract}

\begin{keywords}
  Deep Learning, Neural Network, Causal Structure Learning, Regularization, Visualization, Evolutionary Strategy, Cell Biology
\end{keywords}

\section{Introduction}

Causal discovery is a type of machine learning that focuses on learning causal relationships from data, often solely from observational data.  In this context, a causal structure is represented as a graph $G = (V,E)$ containing a set of vertices $V$ and a set of edges $E$, and the goal is to infer the data-generating causal structure from data.  The majority of contemporary causal discovery algorithms involve searching the space of possible structures to identify the structure supported by the data, and there have been numerous algorithms developed for this purpose \citep{cooper1999overview, spirtes2000causation, lagani2016probabilistic, maathuis2016review, peters2017elements, heinze2018causal}.  Most causal discovery algorithms find causal structure among the observed variables of a dataset.  Learning the causal structure among latent variables remains an important open problem, particularly when using high-dimensional data or attempting to leverage \textit{a priori} knowledge that one set of observed variables causally influences a non-overlapping (disjoint) set of observed variables.  Latent causal structure can provide additional important insight into the causal mechanisms of a system or process.  Thus, new approaches to discovering latent causal structure are needed.  In this work, we explore the utility of using deep learning models for finding latent causal structure when the data consist of two sets of variables and it is known \textit{a priori} that one set ("inputs") causes the other ("outputs")---and the causal path from inputs to outputs is mediated by a set of an unknown number of latent variables, among which the causal structure is also unknown.

This computational problem has important applications, including its application to cancer biology.  Cancer results from somatic genomic alterations (SGAs) (e.g., mutations in DNA) that perturb the functions of signaling proteins and cause aberrant activation or inactivation of signaling pathways, which often eventually influences gene expression.  Typically, an SGA only directly affects the function of a single signaling protein, and its impact is transmitted through a cascade of signaling proteins to influence gene expression.  Since the functional states of signaling proteins in pathways are usually not measured, the whole signaling system can be thought of as a set of hierarchically organized latent variables of which we would like to infer how changed states of some signaling proteins causally influence the state of others.  In a cancer cell, when one signaling protein is perturbed by an SGA event (an observed input variable), it may causally affect the functional states of the proteins in a pathway (which are not observed) and eventually result in changed gene expression (observed output variables).  Thus, our task is to learn how a set of observed input variables causally influence a set of observed output variables, through signals transmitted among a set of latent variables.

In contrast to traditional causal discovery methods, here we explore a modified deep learning model to learn latent causal relationships.  Deep learning represents a family of machine learning algorithms or strategies that originated from artificial neural networks (ANN).  ANNs represent a framework, loosely inspired by biological neurons, for learning a function (represented as a set of parameters or weights) that maps inputs to outputs.  An essential characteristic of deep learning models is their ability to learn compositional representations of the data \citep{lee2008sparse,lecun2015deep}.  A deep learning model is composed of multiple layers of latent variables (hidden nodes or units) \citep{lecun2015deep, goodfellow2016deep}, which learn to represent the complex statistical structure embedded in the data, such that different hidden layers capture statistical structure of different degrees of complexity.  Researchers have found that deep learning models can represent the hierarchical organization of signaling molecules in a cell, with latent variables as natural representations of unobserved activation states of signaling molecules \citep{chen2016learning, young2017unsupervised, lu2018novel, tao2020genome}. 

Conventional neural networks behave like black boxes in that it is difficult to interpret what signal a hidden node in the network encodes.  We hypothesize that with certain modifications inspired by the biological problem mentioned above, one may learn a partially interpretable deep neural network so that the relationships between latent variables can be interpreted as part of a causal chain from input to output variable.  To this end, we designed a modified deep neural network, the redundant input neural network (RINN) that allows each input variable to directly connect to each latent variable within the deep learning hierarchy.  This redundant input structure allows one to learn direct causal relationships between an input variable and \textit{any} latent variable.  This is important for learning cellular signaling pathways, as SGAs perturb signaling proteins throughout the pathway hierarchy, not just at the beginning.  Also, without this modification any attempt at interpreting the latent variables (by mapping to input or output space) would be more complex.  We also developed a robust pipeline for testing an algorithm's ability to capture latent causal structure, including causal simulated data inspired by cellular signaling pathways, a method for visualizing the weights of a neural network, and a method for measuring precision and recall of the causal structure.  In addition, we compare the RINN with a conventional feedforward deep neural network (DNN), a deep belief network (DBN), a RINN optimized with a constrained evolutionary strategy, and the DM algorithm (a causal discovery algorithm).

\section{Related Work}
The work in this study concentrates on finding latent causal relationships when the causal direction between inputs and outputs is known.  The DM (Detect MIMIC (Multiple Indicators Multiple Input Causes)) algorithm is a causal discovery algorithm for finding latent causal relationships between inputs and outputs \citep{murray2014dm, murray2015going}.  The DM algorithm uses a series of heuristics based on conditional independence and Sober's criterion \citep{sober1998black, murray2014dm} (see Appendix) to identify the latent causal structure.  One limitation of the DM algorithm is that it constrains each latent variable to be adjacent to at least one input and one output variable.  This limits the hierarchical and compositional relationships that DM can identify, which makes it less accurate for cellular signaling pathway discovery.  In addition to the DM algorithm, there are other algorithms developed to find latent structure, including factor analysis algorithms, algorithms that use variations of the Expectation-Maximization (EM) algorithm to find latent structure \citep{friedman1997learning, elidan2005learning}, and algorithms based on fast causal inference (FCI) \citep{spirtes1995causal,colombo2012learning}. These algorithms are, in general, highly constrained, intractable in high-dimensional spaces, return only a subset of the latent causal relationships, \textit{or} don't leverage prior knowledge that input variables cause output variables.

Recently, there has been an increased interest in the deep learning community to combine deep learning with causal discovery.  \cite{kalainathan2018sam} developed an algorithm named structural agnostic model (SAM) that trains multiple generative adversarial networks (GAN) \citep{goodfellow2014generative}, one GAN to generate each variable in $\bm{X}$, the measured variables or features in a dataset.  Each GAN uses $\bm{X}_{-j}$ variables (i.e., all variables in $\bm{X}$ other than $x_j$) multiplied by a weight mask, $\bm{a_j}$, to generate $x_j$.  All of the $\bm{a_j}$ vectors, once learned, describe the learned causal structure.  Somewhat similar to SAM, \cite{ke2019learning} also used boolean masks applied to inputs in an ensemble of neural networks to model the causal relationship between observed variables and the observed variable's parents.  \cite{lopez2017discovering} used supervised CNNs to predict the direction of the causal arrow between two variables in static images and \cite{harradon2018causal} built a Bayesian causal model from lower dimensional representations (of images) learned by an autoencoder CNN.  Some research in this space has even concentrated on counterfactual inference or individual treatment effect (ITE) prediction \citep{johansson2016learning, hartford2017deep, louizos2017causal, shalit2017estimating}.  

Experimenting with different architectures is a common theme in deep learning research.  Various versions of fully-connected neural networks (i.e., each node is connected to every other node in the network) have been studied since 1990 \citep{fahlman1990cascade, wilamowski2010neural}.  The dense convolutional network (DenseNet) \citep{huang2017densely} is a fully-connected neural network that concatenates all previous feature-maps (i.e., hidden layer outputs) into a single tensor and use this tensor as input to the next hidden layer in the network.  Highway networks \citep{srivastava2015training} and deep residual networks (ResNets) \citep{he2016deep} are deep learning models with "skip" connections, allowing the output of a previous hidden layer to directly influence the output of a future hidden layer, after skipping one or more hidden layers.  The RINN is perhaps most closely related to a recurrent neural network (RNN).  The RINN model can be thought of as an RNN with non-shared weights, time-invariant (i.e., static) input, and output generated only at the last time-step \citep{liao2016bridging}.

\section{Redundant Input Neural Network (RINN)}
\label{sec:strategies_rinn}

In this study, we desired to model inputs that directly connect to any hidden layer in a neural network.  To accomplish this, we developed the redundant input neural network (RINN) (Figure \ref{fig:rinn} and Algorithm \ref{alg:strategies_rinn}). The RINN has an extra copy of the input $\bm{x}$ directly connected to all hidden layers after the first hidden layer.  In contrast to the RINN, the inputs of a conventional feedforward DNN are only connected to the first hidden layer.  Just like a DNN, the RINN is trained through backpropagation and stochastic gradient descent.  Each hidden layer of a RINN with redundant inputs is calculated according to:

$$\bm{h}^{(i)} = \phi( ( [\bm{h}^{(i-1)},\bm{x}] \bm{W}_i ) + \bm{b_i})$$

\noindent where $\bm{h}^{(i-1)}$ represents the previous layer's output vector, $\bm{x}$ is the vector input to the neural network, $[\bm{h}^{(i-1)},\bm{x}]$ represents concatenation into a single vector, $\bm{W}_i$ represent the weight matrix between hidden and redundant nodes in layer $i-1$ and hidden layer $i$, $\phi$ is a nonlinear function (e.g., ReLU), and $\bm{b_i}$ represents the bias vector for layer $i$.  In contrast to a RINN, a plain DNN calculates each hidden layer as: 
$$\bm{h}^{(i)} = \phi( ( \bm{h}^{(i-1)} \bm{W}_i )+ \bm{b_i}).$$

\begin{figure}[tb]
\centering 
\includegraphics[scale=0.85]{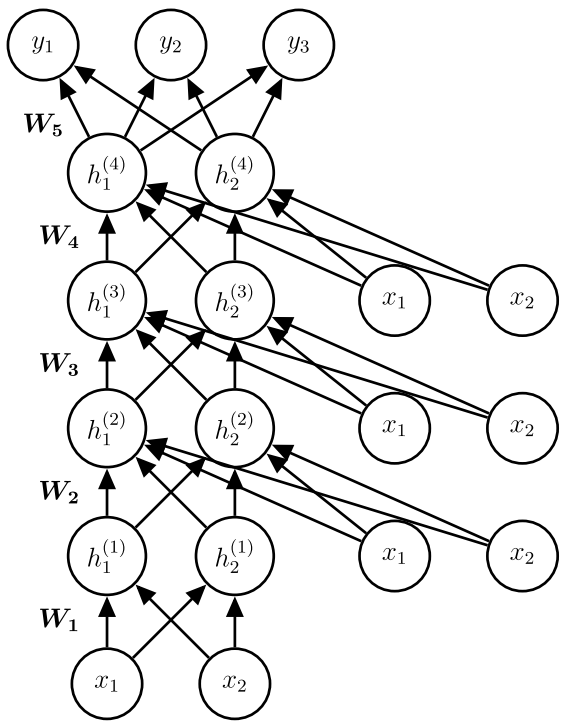}
\caption{Redundant input neural network (RINN).  A RINN with four hidden layers ($h^{(1)}$, $h^{(2)}$, $h^{(3)}$, $h^{(4)}$, each with two nodes), two inputs ($x_1, x_2$ on bottom), three outputs ($y_1$, $y_2$, $y_3$), and three sets of redundant inputs ($x_1$, $x_2$ on right side).  Each node represents a scalar value and each edge represents a scalar weight. The weights between layers are collected in weight matrices $\bm{W_1}$, $\bm{W_2}$, $\bm{W_3}$, $\bm{W_4}$, and $\bm{W_5}$.}
\label{fig:rinn}
\end{figure}

\begin{algorithm}[tb]
\small
\caption{Redundant Input Neural Network (RINN)}
\label{alg:strategies_rinn}
\begin{algorithmic}
\State Bias vectors omitted for clarity
\State $\bm{W}_{l,a}$ is weight matrix between hidden layer $l-1$ and hidden layer $l$
\State $\bm{W}_{l,b}$ is weight matrix between input $l-1$ and hidden layer $l$
\State $\bm{h}_l$ is the vector of values representing hidden layer $l$
\State $\bm{x}_i$ is the input vector for instance $i$
\State $\bm{y}_i$ is the output vector for instance $i$
\State \textit{n} is the number of samples in the dataset
\State $ERROR$ is the objective function to be optimized
\State
\For{$i=1$ to $n$}
	\State $\bm{a}_1 = ReLU(\bm{x}_i \bm{W}_{0,b})$
	\For{$j=1$ to $num\_hid\_layers$}
	    \State $\bm{a}_j =$ CONCATENATE($\bm{a}_{j}, \bm{x}_i$)
	    \State $\bm{W}_j =$ CONCATENATE($\bm{W}_{j,a},\bm{W}_{j,b} $)
	    \State $\bm{a}_{j+1} = ReLU(\bm{a}_j \bm{W}_j)$ 
	\EndFor
    \State $\hat{\bm{y}_i} = sigmoid(\bm{a}_{j+1} \bm{W}_{j+1,a})$ \Comment{for binary $\bm{y}$}
    \State Update all $\bm{W}$ by descending their gradient: 
    $$\nabla_{\bm{W}} ERROR(\bm{y}_i, \hat{\bm{y}_i})$$
\EndFor
\end{algorithmic}
\end{algorithm}

In order to use a neural network to capture causal structure in the network's weights, the weights need to be regularized or constrained in some way.  Without regularization, virtually all weights in a neural network will be nonzero after training, meaning that the returned causal structure (after interpreting nonzero weights as edges) would be much too dense.  We evaluated both $L_1$ and $L_2$ regularization and $L_1$ provided far superior results for finding the causal structure of our ground truth DAG (results not shown).  The objective functions for all neural network-based strategies used in this study included $L_1$ regularization of the weights. \textit{Therefore, for binary outputs, binary cross-entropy loss plus $L_1$ regularization of the weights was used as the objective function.  For non-binary outputs, mean squared error (MSE) plus $L_1$ regularization of the weights was used.} See the Appendix for a discussion of the model selection techniques used in this work, i.e., how to find the hyperparameters (e.g., learning rate, regularization rate, sizes of hidden layers, etc.) for the best models.

\section{Simulating Data From a Known Causal Structure}
\label{sec:methods_sim_data}
Ultimately, we are interested in biological cellular signaling systems and constructing these causal signaling pathways from mutation and expression data.  However, accurate and complete ground truth cellular signaling pathway knowledge (i.e., causal structure) is not available.  To evaluate how well a neural network approach can discover causal structure through its weights, we generated simulated data from an artificial causal hierarchical structure.  We first manually created a ground truth directed acyclic graph (DAG), $G_T$, that fit the requirements described above (i.e., hierarchical structure between a set of inputs and outputs).  In $G_T$ (Figure \ref{fig:ground_truth_big}a), the input layer is directly connected to both the first and second hidden layers to allow us to recover direct causal relationships between inputs and hidden variables.  This structure is also the most biologically plausible way that mutations would affect biological entities in a signaling pathway (assuming the latent variables to be biological entities), as mutations can have an effect at any location in the signaling pathway hierarchy, not just at one level (i.e., hidden layer) of the hierarchy.  The ground truth DAG had 16 inputs $\bm{x}$, 16 outputs $\bm{y}$, and two hidden layers $\bm{h_1}$, $\bm{h_2}$ (Figure \ref{fig:ground_truth_big}a).  We needed an easy way to visualize $G_T$, so we connected the nodes in a specific way to be able to generate visibly recognizable patterns (relative to output space) for each node in $G_T$ (Figure \ref{fig:ground_truth_big} and Section \ref{sec:methods_visualizing}).  Given a specific node $x_i$ in $G_T$, these patterns (heatmaps) indicate if an output node is affected by the value of $x_i$, i.e., \textit{Which output nodes have $x_i$ as an ancestor?} (Figure \ref{fig:ground_truth_big}c, \ref{fig:ground_truth_big}d).

\begin{figure}[tb]
\centering
\includegraphics[scale=0.92]{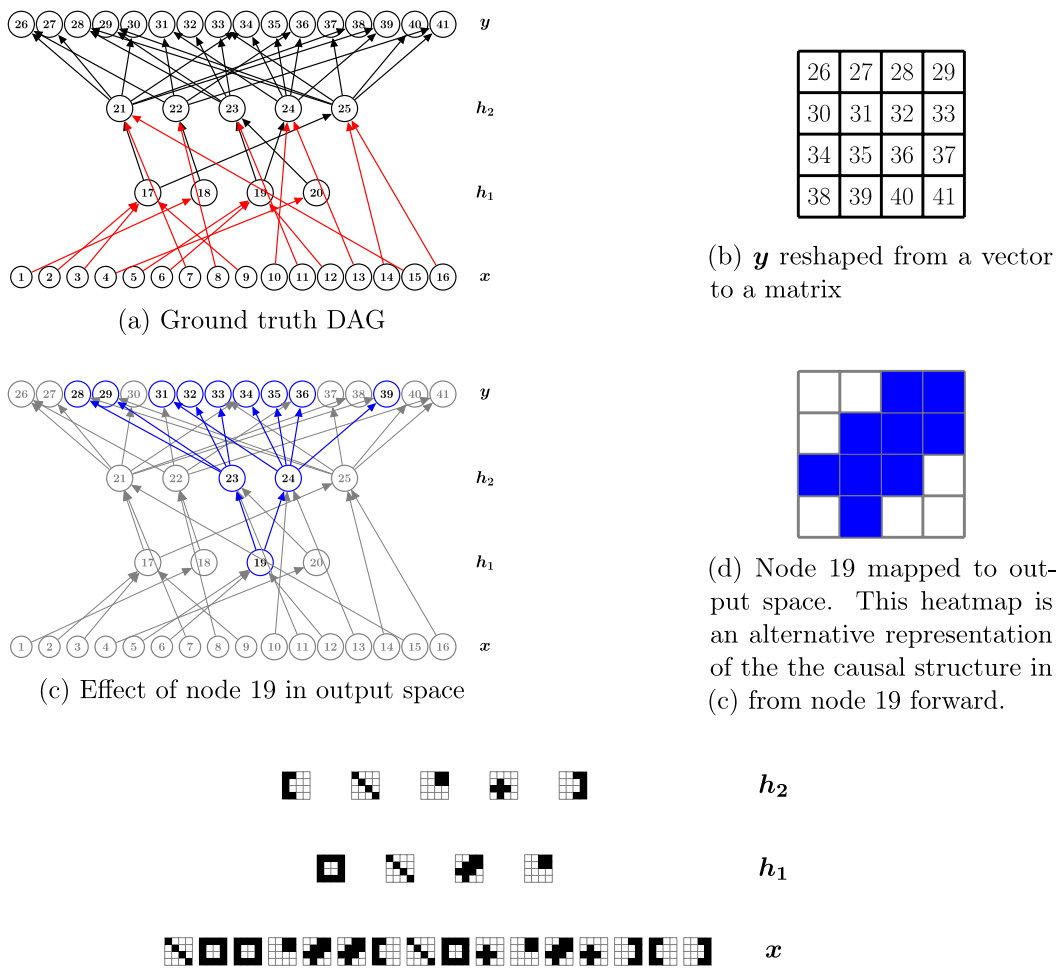}
\caption{Ground truth DAG for simulated data and visualizing output space.}
\label{fig:ground_truth_big}
\end{figure}

We constructed $G_T$ with inputs $\bm{x}$ representing the presence or absence of DNA alterations (i.e., SGAs) and the outputs $\bm{y}$ representing differentially expressed genes (DEGs).  Mutation and expression data are readily available from many sources, e.g., The Cancer Genome Atlas (TCGA) \cite{weinstein2013cancer}.  Between these input and output layers, we envisioned a biological signaling pathway with the hidden layer $\bm{h}_1$ closest to the inputs representing major biological pathways and $\bm{h}_2$ representing transcription factors.  Biologically, mutations (i.e., input nodes $\bm{x}$) directly target transcription factors and proteins higher up in the hierarchy (e.g., cell membrane receptors).  To reflect this in $G_T$, we allowed input mutations in $\bm{x}$ to directly connect to nodes in $\bm{h}_1$ or $\bm{h}_2$.  Each mutation affected only one node, but each node could be affected by multiple mutations.  This is similar to how biological mutations function.  $G_T$ is a crude representation of a biological signaling pathway, and one obvious omission in the DAG abstraction are feedback cycles, which are prevalent in biological signaling pathways.  We generated six different simulated datasets.  Datasets 1-4 had 16 inputs, 16 outputs, and 5,000 instances.  Datasets 5 and 6 had 650 inputs, 84 outputs, and 5,000 instances.

\subsection{Simulated Data}
\begin{enumerate}[1.]
\item  \textbf{Dataset 1:  Matrix Multiplication with Interventions---no noise, binary input, positive integer-valued output}. We simulated data from $G_T$ by having the inputs \textit{intervene} on their adjacent hidden nodes.  An "active" (i.e., 1) input intervention node would set the value of an adjacent hidden node to $0$ for that sample---intended to capture the effect of a biological loss-of-function mutation on a protein. Please see Algorithm \ref{alg:methods_matrix_mult} in the Appendix. 

\item \textbf{Dataset 2:  Linear Gaussian SEM---high noise, input and output $\bm{\in \mathbb{R}}$}.

$$ \bm{b} \sim \mathcal{N}(0, \sigma^{2}) \hspace{5mm}
 \sigma^{2} \sim \mathcal{U}(1,3) \hspace{5mm}
	\bm{W} \sim \pm \mathcal{U}(0.5, 1 .5) $$
Let $G_T$ be our ground truth DAG with nodes $\bm{N}$ (input, hidden, and output), $n_j^{l}$ be node $j$ at layer $l$, $x_i^{l-1}$ be the value of parent (i.e., direct cause) $i$ in layer $l-1$, $w_{ij}^{l}$ be the weight value from parent $i$ to node $j$, and $b_j^{l}$ be the bias for node $j$.  Then we simulate data using a structural equation model (SEM) to calculate the value of each node $n_j^{l}$ in $G_T$ as:
$$n_j^l = \sum_{i \in parents_{G_T} (n_j^l)} w_{ij}^{l}x_i^{l-1} + b_j^{l}$$

\item \textbf{Dataset 3:  OR Logical Operator---high noise, binary}.
We modeled the value of a node in $G_T$ as the output of a logical OR applied to the values of the parents of said node.  We sampled a probability, $p$, to represent: 
$$P(x=1|parents_{G_T}(x))$$ 
for all possible binary combinations of a node's parents' values.  $p$ was sampled from Beta distributions peaked close to $0.10$ or $0.90$ depending on if an OR operator applied to the values of the parents evaluated to False or True, respectively.  For example, to model an OR operator with two parents, we would sample four $p$ values, each representing a Bernoulli distribution as follows,

$$P(x = 1|Pa_{1, G_T} = 1,Pa_{2, G_T} = 1) \sim Beta(90,10)$$
$$P(x = 1|Pa_{1, G_T} = 0,Pa_{2, G_T} = 1) \sim Beta(90,10)$$
$$P(x = 1|Pa_{1, G_T} = 1,Pa_{2, G_T} = 0) \sim Beta(90,10)$$
$$P(x = 1|Pa_{1, G_T} = 0,Pa_{2, G_T} = 0) \sim Beta(10,90)$$

\noindent where $Pa_{1, G_T}$ indicates parent $1$ of node $x$ in graph $G_T$.  This sampling from Beta distributions provided a collection of Bernoulli distributions for all possible binary combinations of parents.  We then used these Bernoulli distributions to generate noisy data (noisy because $p$ not equal to 0 or 1) from OR operators.  Please see Algorithm \ref{alg:and_or_xor} in the Appendix, but assume all OR operators.

\item \textbf{Dataset 4:  AND, OR, XOR Logical Operators---moderate noise, binary}.  To increase complexity, we added AND and XOR operators to replace some of the OR operators in Dataset 3 and re-simulated the data following a similar procedure as for Dataset 3 (see Algorithm \ref{alg:and_or_xor} in the Appendix).  In contrast to Dataset 3, we sampled $p$ from Beta distributions peaked close to $0.05$ and $0.95$ for Dataset 4, meaning that Dataset 4 is less noisy than Dataset 3. 

\item \textbf{Dataset 5:  TCGA + OR---binary}. We wanted to test our algorithms on higher-dimensional, biologically-related datasets.  Biological datasets with mutation and expression data, and a robust, known ground truth causal structure, to our knowledge, do not exist.  So we embedded simulated data (Dataset 3---OR) within a larger biological dataset from TCGA to see if $G_T$ could be discovered, and to also evaluate how the different algorithms performed simply on the prediction task for this biologically-related dataset.  To accomplish this, we obtained mutation and expression data from TCGA and used an algorithm from \citep{cooper2018tumor, cai2019tci} called TCI (tumor-specific causal inference) to perform feature selection on both the mutation and expression data, ultimately leading to a binary dataset with 634 alterations (inputs) and 68 DEGs (outputs) for 5,000 tissue samples.  Next, we embedded simulated Dataset 3 (OR operator) into the feature-reduced TCGA data.  This generated a final dataset with 650 inputs (SGAs) and 84 outputs (DEGs) for 5,000 instances.  Please see \citep{cooper2018tumor, cai2019tci} for a detailed explanation of how TCI works.

\item \textbf{Dataset 6:  TCGA + OR, XOR, AND---binary}. The same as Dataset 5, but embedded with Dataset 4 (instead of Dataset 3).

\end{enumerate}

\section{Other Strategies}
\subsection{DNN}
\label{sec:strategies_dnn}
We compared the RINN with a feedforward DNN. A DNN learns a function mapping inputs ($\bm{x}$) to outputs ($\bm{y}$) according to:
$$f(\bm{x}) = \phi( \cdots \phi(\phi(\bm{x} \bm{W}_1) \bm{W}_2) \cdots  \bm{W}_n) = \hat{\bm{y}}$$
\noindent where $\bm{W}_i$ represent the weight matrices between layers of a neural network, $\phi$ is a nonlinear function (e.g., sigmoid), and $\hat{\bm{y}}$ is the predicted output.  DNNs also have bias vectors added at each layer, which have been omitted from above for clarity.  For more details about DNNs, see \cite{goodfellow2016deep}. 

\subsection{DBN}
\label{sec:strategies_dbn}
A deep belief network or DBN \citep{hinton2006reducing} is a type of autoencoder, an unsupervised DNN.  The DBNs used in this paper were trained using only $\bm{y}$ data, i.e., DBNs learned to map $\bm{y}$ to a lower dimensional representation of $\bm{y}$ (in the hidden layers) and then to reconstruct $\bm{y}$ from the lower dimensional representation.  Error is measured by comparing $\bm{y}$ to $\hat{\bm{y}}$.  We evaluated the trained "decoder" network only (i.e., the second half of the network going from lowest dimensional representation to $\bm{y}$) for evidence of $G_T$ latent variables.  We had difficulty recovering any of the causal ground truth graph (Figure \ref{fig:ground_truth_big}a) with a default DBN, but adding $L_1$ regularization to the finetuning step objective function allowed us to recover at least some of the causal structure. 

\subsection{Constrained Evolutionary Strategy (ES-C)}
We compared the conventional RINN (using gradient descent optimization) to a RINN using an evolutionary algorithm for weight optimization (Algorithm \ref{alg:ga}).  We developed an evolutionary strategy (ES) to optimize the weights of a RINN to explore optimization strategies that make it easier to interpret the weights of a neural network as causal relationships.  Specifically, we used a constrained evolutionary strategy (ES-C), where we constrained the possible values for the weights (e.g., $weight \in \{-1, -0.5, 0, 0.5, 1 \}$).  Constraining the weights in this way represents a non-differentiable objective function, meaning that gradient descent optimization cannot be used.  We represented an individual in the population to be evolved as a vector of weights.  This vector of weights represents a RINN.  Initially these weights were set to all zeros, then if a weight was selected for mutation (as dependent on a mutation rate), it was randomly changed to a weight in the range of legal weight values.  For a fitness function, we used $L_1$ regularization plus either binary cross-entropy error or squared error depending on the dataset.  ES-C was not set up to be parallelized across processors, as early prototyping experiments suggested that ES-C would not perform nearly as well as RINN with gradient descent optimization.  See the Appendix for more information.

\begin{algorithm}[tb]
\small
\caption{Constrained Evolutionary Strategy for Neural Network Weight Optimization}
\label{alg:ga}
\begin{algorithmic}
\State $\bm{f}$ is a vector of fitness values
\State $\bm{f}_{e}$ is a vector of fitness values for elites; initially zeros
\State $\bm{Data}$ represents all input and output data
\State $\bm{w}_i$ is all the weights of neural network $i$ reshaped into a vector
\State $\bm{W}$ is a matrix of all $\bm{w}_i$
\State $\bm{W}_{e}$ is a matrix of all $\bm{w}_i$ for elites only; initially zeros
\State $pw$ is the set of possible weight values (e.g., $\{-1, 0, 1\}$) 

\State
\For{$i=1$ to $num\_generations$}  \Comment for each generation
    \For{$j=1$ to $population\_size$}  \Comment for each neural network
        \State $f_j = $ FITNESS($\bm{w}_j$, $\bm{Data}$) \Comment neural network objective function
    \EndFor
    \State $\bm{f} = $ CONCATENATE($\bm{f}, \bm{f}_{e}$) \Comment add previous generation elite fitness values
    \State $\bm{W} = $ CONCATENATE($\bm{W}, \bm{W}_{e}$) \Comment add previous generation elite weights
    \State $elites =$ TOP\_20\_PERCENT($\bm{f}$)  \Comment get best neural networks
    \State $\bm{f}_e =$ fitness values of the elite networks
    \State $\bm{W}_e =$ weight values of the elite networks
    \For{$k=1$ to $population\_size$}  
        \State $\bm{w}_k = $ MATE(RAND($elites$), RAND($elites$))  \Comment 50\% of weights from each, randomly
        \State $\bm{w}_k = $ MUTATE($\bm{w}_k$, $pw$)  \Comment randomly change some $w_k$ to different value in $pw$
    \EndFor
\EndFor
\State RETURN $\bm{w}_{best}$ \Comment neural network with highest fitness function
\end{algorithmic}
\end{algorithm}

\subsection{DM Algorithm}
As another comparison, we evaluated the performance of the DM (Detect MIMIC) algorithm \citep{murray2015going} on the first four simulated datasets.  The DM algorithm takes two tuning parameters: an alpha level for Fisher's Z test of conditional independence and an alpha level for Sober's criterion.  We performed an abbreviated grid search over these tuning parameters.  We selected values for the first alpha level $\alpha \in \{0.05, 0.01, 0.001, 0.0001\}$ motivated by \cite{ramsey2017comparison} and values for the second alpha level $\alpha \in \{ 0.1, 0.05, 0.01 \}$.  Results for the best performing combinations of tuning parameters for the DM algorithm were reported. See the Appendix for more information.

\section{Causal Assumptions}
\label{sec:methods_causal}

Given a dataset where the variables may be partitioned into inputs $X$ and outputs $Y$, fitting the RINN to the data results in a sparsely connected DAG $G_E = (V_E, E_E)$. Generally, the structure of a neural network has no causal meaning, however, in this section we list a set of assumptions that connect the hidden nodes and directed edges of $G_E$ to latent causal relationships. That is, we fit the RINN to data for the purpose of learning genuine latent variables and causal pathways between $X$ and $Y$ mediated by those latent variables.

Before stating our assumptions, we note the following conventions and definitions. Let $G_T = (V_T, E_T)$ denote the true causal graph where $V_T$ may be partitioned into $X$, $Y$, and the latent variables $L_T$. Let $G_M = (V_M, E_M)$ denote the modified graph where $V_M$ may be partitioned into $X$, $Y$, and the latent variables $L_M$. We construct $G_M$ from $G_T$ by: (i) removing any latent variable that has a descendant in $X$ or an ancestor in $Y$; (ii) removing any latent variable that does not have an ancestor in $X$ or a descendant in $Y$; (iii) removing any edge between two members of $X$ or two members of $Y$. Intuitively, the modification procedure deletes variables and edges that are not of interest and retains identifiable variables and causal pathways between $X$ and $Y$. We say a latent variable is identifiable if: (i) it has no descendants in X or ancestors in Y; (ii) its ancestors restricted to $X$ are non-empty and unique\footnote{By unique, we mean a set that is \textit{not equal} to any other set under consideration.} with respect to the other latent variables satisfying (i); (iii) its descendants restricted to $Y$ are non-empty and unique with respect to the other latent variables satisfying (i).
%its set of ancestors is not equal to any other latent variable's set of ancestors. two sets that are not equal.

\begin{enumerate}[{A}1.]

\item The true causal graph $G_T$ is a DAG where the set of post-modification latent variables $L_M$ is the subset of latent variables that are identifiable in $L_T$.

In a biological context, feedback cycles are a common characteristic of molecular signaling pathways. Assuming acyclicity is a limitation of using the methods in this study.

\item For all $w \in L_T$, $x \in X$, and $y \in Y$, $y$ is not an ancestor of $w$ or $x$, and there is at least one latent variable on every path between $x$ and $y$ in $G_T$. That is, the inputs cause the outputs mediated through latent variables.

For example, mutations in DNA cause changes in gene expression through modifications to cell signaling.  The methods developed in this study can be used with any datasets where the causal direction between two sets of variables is known. 

\item For all $w \in L_T$ and $y \in Y$, if there is a directed path from $w$ to $y$ that does not include some $x_1 \in X$ in $G_T$, then there is no directed path from $w$ to any $x_2 \in X$. That is, the inputs and outputs are unconfounded. 

In a biological context, where $X = SGA$ and $Y = DEG$, this assumption is generally plausible.

\item For all $x_1, x_2$ $\in X$, $x_1$ is not an ancestor of $x_2$ and for all $y_1, y_2$ $\in Y$, $y_1$ is not an ancestor of $y_2$. This assumptions still allows latent confounding between the members of $X$ and latent confounding between the members of $Y$.

\item No selection bias occurred during the data collection process.

\item Restricted Causal Markov: For conditional independence statements of the form ${x \perp \!\!\! \perp y \mid S}$ for $S \subseteq (X \cup Y) \setminus \{x, y\}$ where $x \in X$ and $y \in Y$, \textit{d}-separation\footnote{\textit{d}-separation is a criterion for reading conditional independence statements from a directed graph.} in $G_T$ implies conditional independence in the data distribution.

\item Restricted Causal Faithfulness: For conditional independence statements of the form ${x \perp \!\!\! \perp y \mid S}$ for $S \subseteq (X \cup Y) \setminus \{x, y\}$ where $x \in X$ and $y \in Y$, conditional independence in the data distribution implies d-separation in $G_T$. 

\item The identifiable latent structure of $G_T$ is minimal in the following sense. Let $G_T'$ be the graph resulting from altering\footnote{By altering, we mean adding or removing variables and edges such that the resulting graph differs by more than a relabeling of the variables or a sequence of covered edge reversals.} the identifiable latent structure. If $G_T'$ satisfies restricted Markov (see A6), then the altered modified latent edges $E_M'$ satisfy $|E_M'| > |E_M|$.

This can be seen as an appeal to Occam's razor---preferring the most parsimonious structure. 

\item The ground truth causal relationships can be modeled with one or multiple affine transformations plus nonlinear functions (hidden layer calculations in a neural network).

\item The global optimum achieved by the RINN results in a DAG $G_E$ satisfying restricted Markov and restricted faithfulness (see A6 and A7) whose latent structure is minimal (see A8). 

The RINN uses $L_1$ regularization to encourage learning the most parsimonious graph capable of representing the conditional distribution.

\end{enumerate}

\begin{lemma}
Let $L_R = L_T \setminus L_M$ be the latent variables removed during modification and $\pi$ be a path in $G_T$ between $x \in X$ and $y \in Y$. The following are equivalent:
\begin{enumerate}
\item $\pi$ is inducing relative to $L_R$;
\item $\pi$ is primitively inducing.\footnote{A path $\pi$ between variables $a$ and $b$ is \textit{inducing} relative to a set $L$ if every non-endpoint variable on $\pi$ is in $L$ or a collider, and every collider on $\pi$ is an ancestor of $a$ or $b$. When $L = \emptyset$, $\pi$ is \textit{primitively inducing}.}
\end{enumerate}
\end{lemma}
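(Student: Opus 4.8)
The plan is to treat the two implications asymmetrically: $(2)\Rightarrow(1)$ is essentially free, whereas $(1)\Rightarrow(2)$ carries all the content. For $(2)\Rightarrow(1)$ I would just invoke monotonicity of the inducing condition in the latent set. Since $\emptyset \subseteq L_R$, a path all of whose non-endpoints are colliders (which is what inducing relative to $\emptyset$ demands) automatically has every non-endpoint either a collider or in $L_R$; the collider-ancestor clause is literally the same in both definitions. So a primitively inducing path is inducing relative to $L_R$ with nothing to check.

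For $(1)\Rightarrow(2)$, observe that the clause ``every collider on $\pi$ is an ancestor of $x$ or $y$'' is shared by the two notions, so it suffices to show $\pi$ has no non-endpoint non-collider. I would argue by contradiction, assuming such a vertex exists, so that $\pi$ has length at least two. Two structural facts are worth recording first. (a) A sink property for $Y$: by A2 no $y\in Y$ is an ancestor of any latent or input, and by A4 no $y$ is an ancestor of another output, so a child of $y$ would have to lie outside $X\cup Y\cup L_T$, which is impossible; hence every output is a sink and the edge of $\pi$ meeting its $Y$-endpoint points into that endpoint. (b) A reachability fact: tracing directed edges of $\pi$ away from a non-collider along one of its out-edges, the trace can only stop at an endpoint or at a collider (a non-collider would force it to continue); since every collider is an ancestor of $x$ or $y$, each non-collider on $\pi$ is itself an ancestor of $x$ or of $y$.

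The core is then to extract a violation of the unconfoundedness assumption A3. Let $v_{k-1}$ be the neighbor of the output endpoint; by (a) we have the edge $v_{k-1}\to y$, so $v_{k-1}$ is a non-collider and hence, by (1), lies in $L_R$. Now $v_{k-1}\to y$ is an $X$-avoiding directed path into $Y$, so A3 forbids $v_{k-1}$ any descendant in $X$, A2 forbids it any ancestor in $Y$, and the minimality assumption A8 excludes the duplicate (non-unique) case; using A1, that retained latents are exactly the identifiable ones, the only surviving way for $v_{k-1}$ to have been deleted is that it has \emph{no ancestor in $X$}. I would then walk inward toward the input endpoint maintaining the invariant that the current vertex lies in $L_R$, has no ancestor in $X$, and admits an $X$-avoiding directed path to $Y$, propagating it across a chain step directly and across a fork step by routing through the intervening collider supplied by fact (b). The walk strictly decreases distance to $x$, so it must reach the edge incident to $x$: if that edge is oriented out of $x$, then $x$ is an ancestor in $X$ of the current vertex, contradicting the invariant; if it is oriented into $x$, then the current vertex has a descendant in $X$ while still possessing an $X$-avoiding directed path to $Y$, contradicting A3. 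Either way we contradict the existence of the assumed non-collider, so every non-endpoint of $\pi$ is a collider and $\pi$ is primitively inducing.

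I expect the main obstacle to be exactly the fork-through-collider step of this walk, and in particular the case where the trace terminates at a collider $c$ that is an ancestor of $y$ (not of $x$) whose \emph{other} parent is the input $x$ itself. Here the invariant cannot be carried to $x$ (it is not in $L_R$), and the local picture no longer forces a contradiction: a removed latent that is a pure source into $Y$ can confound a retained mediator with an output and thereby create a genuine inducing path. Closing this case is where the assumptions must be combined most carefully, leaning on A3 together with the mediation reading of A2 (that the inputs are the ultimate sources, so every latent has an ancestor in $X$) and on A8 to forbid such sources; verifying that the invariant is truly preserved through colliders, and that the deletion characterization forces the descendant-in-$X$/no-ancestor-in-$X$ dichotomy rather than some other failure of identifiability, is the delicate part of the argument.
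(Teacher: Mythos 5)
Your direction $(2)\Rightarrow(1)$ is correct and identical to the paper's: it is immediate from the definition of an inducing path. For $(1)\Rightarrow(2)$ you adopt the same overall strategy as the paper---argue by contradiction that a non-endpoint non-collider must lie in $L_R$ and then interrogate the reason it was removed, deriving violations of A2 or A3---but you organize it as a walk from the $Y$-endpoint inward carrying an invariant, whereas the paper fixes the non-collider $v$ closest to $x$ (so that the segment toward $x$ contains only colliders, hence at most one) and performs a direct four-way case split on the membership criteria for $L_R$ (no ancestor in $X$, no descendant in $Y$, a descendant in $X$, an ancestor in $Y$), closing each branch with A2 or A3 and a recursion; note that membership in $L_R$ is determined by modification steps (i) and (ii) alone, so there is no separate ``non-unique'' case for A8 to dispose of, and the paper's proof never invokes A8.

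The decisive issue is that your argument is not complete, and you say so yourself. The fork-through-collider step---where the directed trace out of the current non-collider toward $x$ terminates at a collider $c$ that is an ancestor of $y$ but not of $x$, with $x$ on the far side---is left open, and it is not a removable technicality: locally the configuration $x \to c \leftarrow v \to y$ with $v$ a source latent (no ancestor in $X$, hence in $L_R$) is compatible with A2--A4, and your invariant produces no contradiction there, so an actual argument is required rather than a gesture at ``combining the assumptions carefully.'' In addition, the preservation of your invariant across a collider is not established: for the far-side parent $u$ of $c$ you need $u$ to have an $X$-avoiding directed path to $Y$ and no ancestor in $X$, but the directed path $u \to c \to \cdots \to y$ need not avoid $X$, so A3 cannot be applied to $u$ without further work. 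As written, the proposal is a plan whose critical case and key inductive step are both unproved.
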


\begin{proof}
If $\pi$ is primitively inducing, then $\pi$ is inducing relative to $L_R$ by the definition of inducing path. Suppose by way of contradiction that $\pi$ is not primitively inducing, but $\pi$ is inducing relative to $L_R$. Then every non-collider on $\pi$ is a variable $v \in L_R$. Pick the closest $v$ to $x$ so that there are no non-colliders between $v$ and $x$ and consider the criteria for membership to $L_R$.

Suppose $v$ does not have an ancestor in $X$. Then $v$ is a parent of $w \in X$ on one side and directed into or out of $v$ on the other side. If directed into $v$, then there is a directed path from $y$ to $v$. This violates A2. If directed out of $v$, then there is a directed path from $v$ to a collider $u \in X$ followed by $y$ or from $v$ to $u \in Y$. This violates A2 or A3. Suppose $v$ does not have a descendant in $Y$. Then $v$ is adjacent to $x$ on one side and directed into or out of $v$ on the other side. If directed into $v$, then there is a directed path from $y$ to $v$. This violates A2. If directed out of $v$, then there is a directed path from $v$ to a collider $w \in X$ followed by either $y$ or $u \in L_R$. The former violates A2. In the latter case, repeat this argument with $u$ in place of $v$ and $w$ in place of $x$.

Suppose $v$ has a descendant $w \in X$. Then $v$ is adjacent to $u \in X$ on one side and directed into or out of $v$ on the other side. If directed into $v$, then there is a directed path from $y$ to $v$. This violates A2. If directed out of $v$, then there is a directed path from $v$ to $t \in Y$ or from $v$ to $t \in X$. The former violates A3. In the latter case, $t$ is followed by either $y$ or $s \in L_R$. The former violates A2. In the latter case, repeat this argument with $s$ in place of $v$ and $t$ in place of $x$. Suppose $v$ has an ancestor in $w \in Y$. Then A2 is violated.
\end{proof}

Supposing that a specified set of variables is latent, inducing paths characterize when edges should be added between variables. Therefore, Lemma 1 shows that no edges are added when performing modification steps (i) and (ii).

\begin{proposition}
The global optimum achieved by the RINN is the modified graph of the true causal graph, that is $G_E = G_M$.
\end{proposition}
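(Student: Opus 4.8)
The plan is to show that the modified graph $G_M$ and the RINN optimum $G_E$ are both minimal DAGs that are restricted Markov and restricted faithful to the data distribution $P$, and then to argue that such a graph is unique, forcing $G_E = G_M$. The argument splits into three parts: (1) verify that $G_M$ inherits restricted Markov, restricted faithfulness, and minimality from $G_T$; (2) recall that $G_E$ has these same three properties directly from A10; and (3) show that any two graphs enjoying all three properties must coincide.

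For part (1), I would first argue that the modification procedure preserves the restricted conditional-independence structure. By A6 and A7 the restricted d-separations of $G_T$ coincide with the restricted conditional independencies of $P$. Steps (i) and (ii) delete the latents in $L_R = L_T \setminus L_M$; by Lemma 1, for every $x$--$y$ path the property of being inducing relative to $L_R$ coincides with being primitively inducing, so marginalizing $L_R$ adds no edges and therefore leaves every restricted d-separation unchanged in both directions. Step (iii) is vacuous on $G_T$: since $G_T$ is a DAG and A4 forbids any ancestral relation within $X$ or within $Y$, there is no directed edge between two members of $X$ or of $Y$ to remove. Hence $G_M$ realizes exactly the restricted d-separations of $G_T$ and is therefore restricted Markov and restricted faithful to $P$. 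Minimality of $G_M$ is then immediate: by A1 the latents $L_M$ are precisely the identifiable latents of $G_T$, and A8 states that this identifiable latent structure has the fewest latent edges among all alterations that remain restricted Markov, the redundant non-unique latents (those failing clauses (ii), (iii) of identifiability) being exactly the ones that would only inflate the edge count.

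For part (3) I would compare $G_E$ and $G_M$ using the minimality assumptions symmetrically. Both are restricted faithful to $P$, so they encode the same restricted conditional independencies and hence the same restricted-Markov class. Suppose $G_E \neq G_M$. If they differed by more than a relabeling or a sequence of covered edge reversals, then $G_E$ would be an alteration of the identifiable latent structure of $G_M$ that remains restricted Markov, so A8 forces $|E_E| > |E_M|$; applying the minimality in A10 to $G_M$ as an alteration of $G_E$ gives $|E_M| > |E_E|$, a contradiction. Therefore $G_E$ and $G_M$ can differ only by a relabeling or covered edge reversals. These residual ambiguities are eliminated by restricted faithfulness together with the identifiability definition: each retained latent is pinned down by its unique non-empty set of $X$-ancestors and unique non-empty set of $Y$-descendants, which fixes the labeling, while the orientation imposed by A2--A4 (inputs ancestral to outputs through latents, no ancestral relations inside $X$ or $Y$) together with the feedforward orientation of the RINN forbids any covered edge reversal. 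Hence $G_E = G_M$.

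The main obstacle is this last step: ruling out distinct but Markov-equivalent minimal structures. Showing that the restricted conditional-independence structure plus minimality determines the latent DAG uniquely is precisely an identifiability claim, and it is here that the uniqueness clauses (ii) and (iii) of the identifiable-latent definition must do the real work, ensuring that no two latents share the same $X$-ancestor/$Y$-descendant signature and that no covered edge reversal among latents is consistent with the fixed $X$-to-$Y$ orientation. Care is also needed to confirm that deleting an \emph{entire} latent, rather than a single edge, is the correct reading of ``inducing paths characterize when edges are added'' in the case where some retained latents of $L_M$ lie on the affected paths.
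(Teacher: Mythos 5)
Your proposal is correct and follows essentially the same route as the paper's own proof sketch: Lemma 1 handles modification steps (i) and (ii), A4 makes step (iii) vacuous, the minimality assumptions A8 and A10 reduce the difference between $G_E$ and $G_M$ to a relabeling or covered edge reversals, and the identifiability definition together with the fixed $X$-to-$Y$ orientation (the paper cites A1 and A3 to propagate orientations forward from $X$) eliminates that residual ambiguity. Your symmetric-minimality contradiction and your closing caveat about Markov-equivalent minimal structures are slightly more explicit than the paper's sketch, but they elaborate the same argument rather than depart from it.
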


\noindent \textbf{Proof Sketch}\hspace{1mm}  
Perform modification steps (i) and (ii) on $G_T$; by Lemma 1, the resulting graph satisfies restricted Markov and restricted faithfulness. By A4, no two members of $X$ are adjacent and no two members of $Y$ are adjacent. Perform modification step (iii) to no effect. Therefore the resulting graph satisfies restricted Markov and restricted faithfulness. By A8 and A10, $G_E$ and $G_M$ are equivalent up to a covered edge reversal. By A1 and A3, for all $x \in X$, $y \in Y$, and $w \in L_M$, $x$ is not a descendant of $y$ or $w$ and every $w$ has an ancestor in $X$. Therefore, using the definition of identifiable, we may propagate the orientations from $X$ forward making every edge orientation invariant. Accordingly, $G_E = G_M$.
\hfill\BlackBox\\[2mm]

The causal ground truth DAG in Figure \ref{fig:ground_truth_big}a and simulated data adhere to these assumptions. Given our assumptions, Proposition 2 states that the hidden nodes and directed edges learned by the RINN represent the latent structure over the identifiable latent variables in the true causal graph if the RINN reaches the global optimum. We conjecture that this result holds in practice for local optimum given adequate model selection. The empirical results of this paper provide support for this conjecture.

\section{Balancing Sparsity and Error}
\label{sec:methods_distance}
In order to find simple and accurate models, we needed to modify our approach to model selection in order to best balance sparsity of weights and prediction error.  After training a model with various combinations of hyperparameters, we selected the models (sets of weight matrices for each trained network) that were relatively sparse and had a relatively low prediction error.  Under-regularizing leads to overfitting the data, while over-regularizing leads to underfitting the data.  We hypothesized that the models with best chances of capturing causal relationships in their weights will be the models that optimally balance both prediction error and sparsity.  To balance this trade-off, in plots of prediction error versus sparsity (See Appendix Figure \ref{fig:ms}), we measured the Euclidean distance from each point (model) to the origin according to: 
$$d_{x} = \sqrt{ \left( \sum_{i=1}^{m} \sum_{j=1}^{r_i} \sum_{k=1}^{c_i} \lvert w_{j,k}^{(i)}\rvert \right) ^{2} + {L_{x}}^{2} } $$

\noindent where $L_{x}$ is the average validation set loss for neural network $x$, $m$ is the number of weight matrices in neural network $x$, $r_i$ and $c_i$ are the number of rows and columns in matrix $i$ respectively, and $w$ is a scalar weight.  Prior to calculating $d_x$, we removed values outside the 95\% quantile for each axis and scaled the axes using min-max scaling. The models with the lowest $d_x$ were selected for further analysis and calculation of precision and recall of causal structure.

\section{Visualizing a Neural Network}
\label{sec:methods_visualizing}
To increase our understanding of what the weights of a neural network capture after training, we developed a simple visualization technique based on matrix multiplication as follows:

$$\bm{M}_j = \bm{W}_j \bm{W}_{j+1} \bm{W}_{j+2} \cdots \bm{W}_h$$
$$ \bm{v}_j = reshape(\bm{M}_j) $$

\noindent where $\bm{M}_j$ is an $m \times n$ matrix for layer $j$, $m$ is the number of nodes in layer $j$, $n$ is the number of nodes in the output $\bm{y}$, $\bm{W}_j$ is the weight matrix between layer $j$ and $j+1$, and $\bm{W}_h$ is the last weight matrix.  Each row in $\bm{M}_j$ represents what a single node in layer $j$ maps to in output space. $reshape$ means to reshape each row of $\bm{M}_j$ to a $4 \times 4$ matrix (assuming $n = 16$) (Figure \ref{fig:ground_truth_big}b). Therefore, $\bm{v}_j$ is a list of $4 \times 4$ matrices (heatmaps) for all hidden nodes in layer $j$.  Next, the magnitude of each value in the $4 \times 4$ matrices is interpreted as a pixel intensity (or heatmap intensity), and displayed as a given intensity of color (Figure \ref{fig:ground_truth_big}d).  In this way, we generated 2-D heatmaps for each hidden and input node in a trained neural network.  This visualization represents the weights connecting a node to output space, or the influence of the activation of that node on the output values (Figure \ref{fig:ground_truth_big}e).  The heatmap visualizations of all nodes' effects in output space is a representation of the causal structure learned by a neural network.

\section{Experiments}
\subsection{Visualizing the Weights of a Neural Network}
In addition to evaluating the different deep learning strategies and distance metric ($d_x$) for finding causal structure, we also needed a method to quickly evaluate many networks without relying on visual inspection.  The solution to this problem came from our method to visualize the weights of a neural network.  Figure \ref{fig:dnn_pr} shows visualizations of the weights for two different DNNs.  Each square in these visualizations represents a specific node in a neural network and the change in the output values induced by the activation of this node (Section \ref{sec:methods_visualizing}).  Figure \ref{fig:dnn_pr}a shows the ground truth patterns/visualizations based on the ground truth DAG (Figure \ref{fig:ground_truth_big}a). These are the patterns we are looking for in the weights of a network to indicate the correct causal structure.  The color of these heatmaps is of minimal importance, while the pattern in the heatmaps is of maximal importance.  

\begin{figure}[tb]
\centering 
\includegraphics[scale=0.92]{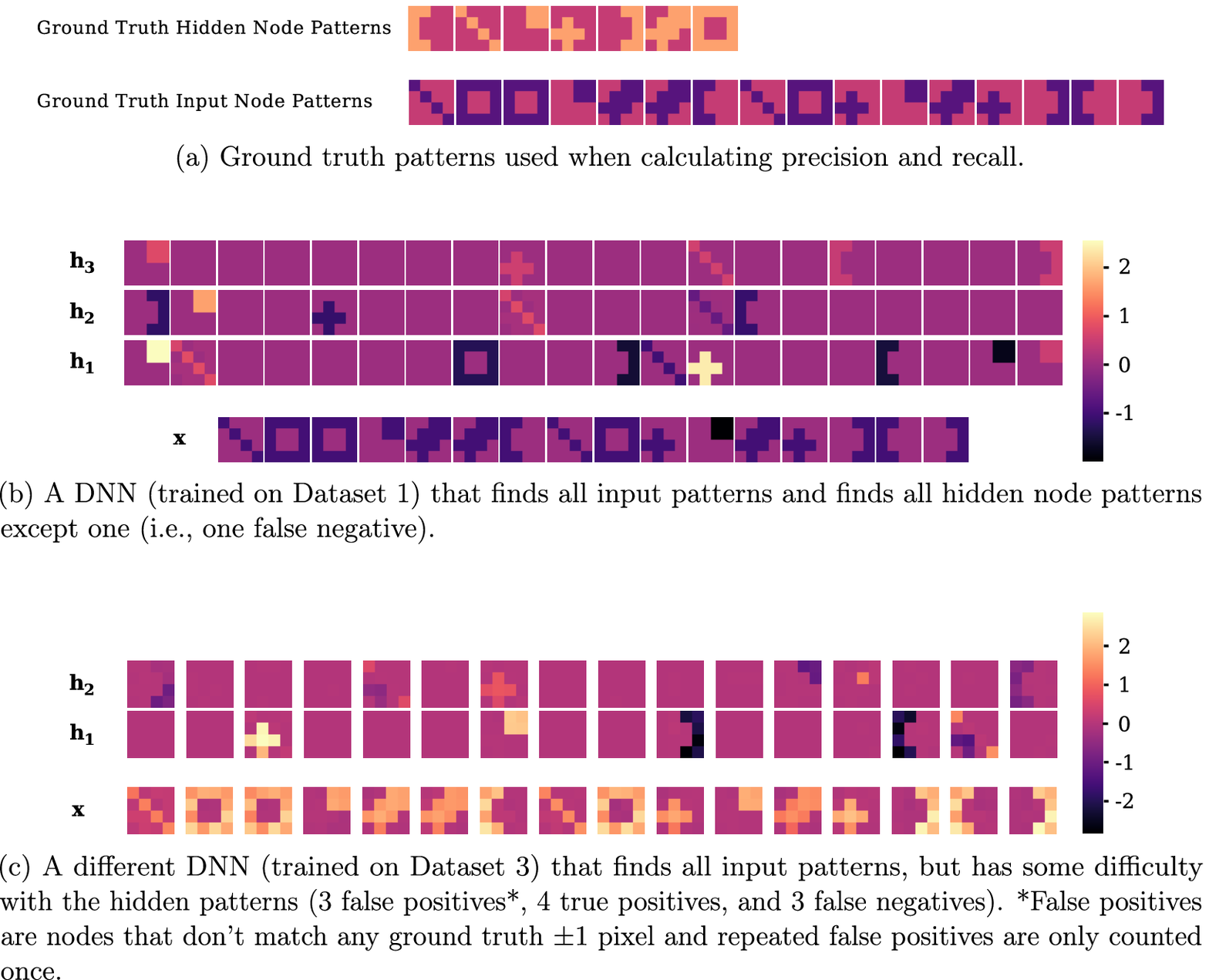}
\caption{DNN weight visualizations.}
\label{fig:dnn_pr}
\end{figure}

Figure \ref{fig:dnn_pr}b shows a visualization of the weights of a 3-hidden layer DNN trained on Dataset 1.  If this DNN had not been regularized, then most of the hidden nodes ($\bm{h_1}$, $\bm{h_2}$, $\bm{h_3}$) in these visualizations would appear mostly as static noise.  The visualizations labeled as $\bm{x}$ show what each of the input nodes for this DNN map to in output space.  If we compare this set of images to the ground truth input node patterns in Figure \ref{fig:dnn_pr}a (bottom), we see that the input nodes in the DNN 100\% match with the ground truth input node patterns.  This example immediately suggests a possible method for evaluating how well neural networks capture causal structure---by calculating precision and recall based on the ground truth patterns that should be present within these visualizations for the causal structure to be correct.

\subsection{Precision and Recall of Causal Structure: Description and Examples}
% \subsection{Precision and Recall of Causal Structure Explained with Examples}
%Calculating Precision and Recall of Causal Structure
To calculate precision and recall of the causal structure, we ran the visualization algorithm on all layers in our trained RINN, DNN, DBN, or ES-C, obtaining a vector with 16 values for each node in the neural network, representing the influence of that node on output space.  We took the absolute value and then binarized these vectors in order to match them to the ground truth, thus being able to calculate precision and recall.  This required setting a threshold, above which a value would be set to $1$ and otherwise set to $0$.  We chose this threshold by searching through the ten best models (for each dataset and strategy combination) for a single threshold that provided the best $F_1$ score across all ten models.  The same threshold was then used for all ten models.  We obtained precision and recall measurements for each neural network by comparing these binary vectors to ground truth binary vectors.  To calculate the ground truth vectors, we used the same algorithm as in Section \ref{sec:methods_visualizing} applied to the ground truth binary weight matrices that represent $G_T$.  True positives (TP) here represent ground truth hidden nodes or input nodes (remember, a node is represented as a vector of numbers in output space) that were captured by the weights of the trained network.  False positives (FP) represent predicted vectors from the trained network that did not match any of the ground truth nodes or the zero vector.  False negatives (FN) represent ground truth hidden nodes or input nodes that were not captured by the weights of the trained network.  When comparing a predicted output space vector to a ground truth vector, we considered the predicted to match the ground truth if the binary vectors were identical within $\pm 1$ value (i.e., pixel).  The network in Figure \ref{fig:dnn_pr}b achieved 100\% \textit{input node} precision and recall.

After examining the vector of images labeled as $\bm{h}_1$, in Figure \ref{fig:dnn_pr}b, we see that this hidden layer by itself captures six of the seven ground truth hidden node patterns from Figure \ref{fig:dnn_pr}a ($\bm{h_2}$ and $\bm{h_3}$ do not find any additional ground truth hidden node patterns).  The calculations for \textit{hidden node} precision and recall for this DNN would be as follows:  $Precision (P) = \frac{TP}{TP+FP} = \frac{6}{6+0} = 1.00$, $Recall (R) = \frac{TP}{TP+FN} = \frac{6}{6+1} = 0.86$.  So despite regularization encouraging the DNN to represent patterns as simply as possible, this DNN did not capture the ground truth hidden node that is a combination of the "plus sign" and "little square" (second from the right in Figure \ref{fig:dnn_pr}a top). Nodes 17 and 19 in Figure \ref{fig:ground_truth_big}a and \ref{fig:ground_truth_big}e are referred to as "combination" nodes, as they represent a combination of two other nodes.  Figure \ref{fig:dnn_pr}c shows a visualization of a different DNN that did not capture the causal structure as nicely as Figure \ref{fig:dnn_pr}b. The input nodes map 100\% correctly, but there are at least three FPs and only four TPs found within the hidden layer nodes. $P = \frac{4}{4+3} = 0.57$, $R = \frac{4}{4+3} = 0.57$.

The first hidden layer of a DNN must capture all the hidden node patterns required to completely map inputs to outputs, as it does not have another chance to capture information it may have missed in the first hidden layer later in the network.  The hidden layer representations of a DNN are alternative representations of all the information in the input necessary to calculate the output.  This is in contrast to a RINN, which can simply learn the patterns at a later hidden layer because a RINN has redundant inputs.  So the hidden layers of a RINN are not necessarily alternative representations of the input, but they capture salient aspects of the input data in some way desirable to calculate the output.

%%% RINN precision/recall visualizations from INKSCAPE
\begin{figure}[tb]
\centering 
\includegraphics[scale=0.92]{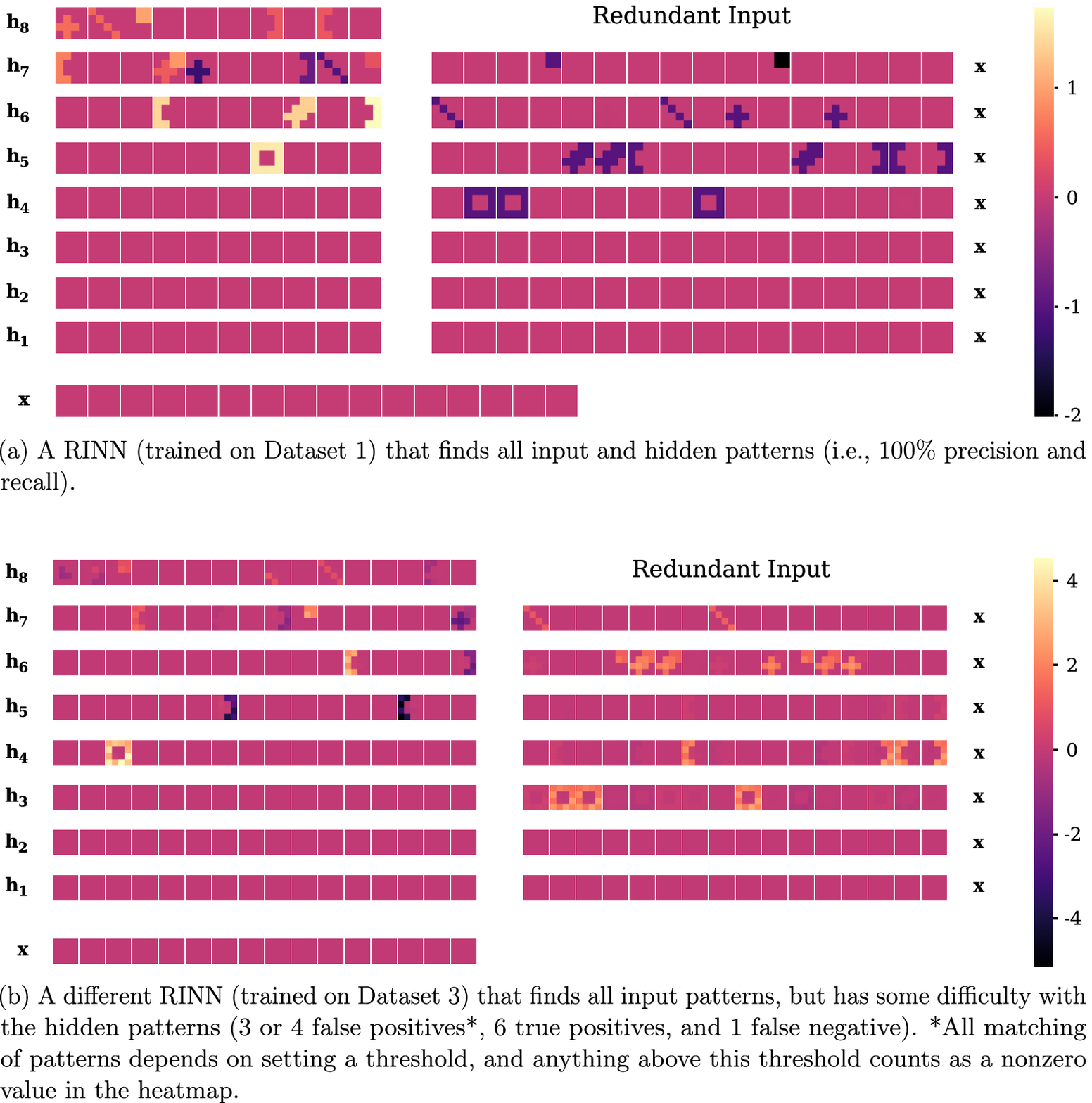}
\caption{RINN weight visualizations.}
\label{fig:rinn_pr}
% \small
% ($\bm{x}$ represents the input layer, $\bm{h_j}$ represents hidden layer $j$).
\end{figure}

Figure \ref{fig:rinn_pr} shows the visualizations for all weight matrices in two different trained RINNs.  The RINN visualized in Figure \ref{fig:rinn_pr}a is an example of a network that scores 100\% precision and recall for both input and hidden nodes.  To evaluate the input nodes of a RINN, all input node heatmaps for a particular input are summed across all redundant layers of the network, and this final sum of heatmaps is compared to the known ground truth for that input.  We can see in Figure \ref{fig:rinn_pr}a that there is no need to sum the heatmaps because the input nodes learn the correct mapping (Figure \ref{fig:dnn_pr}a (bottom)) at only a single redundant input layer (for each input node).  When we look at the heatmaps for the hidden nodes, we see that all seven ground truth patterns from Figure \ref{fig:dnn_pr}a (top) are present in either $\bm{h}_5$, $\bm{h}_6$, $\bm{h}_7$, or $\bm{h}_8$.  Figure \ref{fig:rinn_pr}b shows the visualization of a different RINN that doesn't score as well as Figure \ref{fig:rinn_pr}a.  The input nodes of this RINN achieve 100\% precision and recall.  But the hidden nodes for this RINN show four FPs, six TPs (although one of the combination nodes, i.e., node 19 in Figure \ref{fig:ground_truth_big}c, \ref{fig:ground_truth_big}d, and \ref{fig:ground_truth_big}e, is very close to missing the threshold), and one FN ($7-6 = 1$). The final precision and recall for the hidden nodes would be:  $P = \frac{6}{6+4} = 0.60$, $R = \frac{6}{6+1} = 0.86$.

Precision and recall for the TCGA+OR and TCGA+OR, XOR, AND datasets were calculated based only on the recovery of the simulated data ground truth---as we don't know the ground truth for the TCGA part of the data and therefore have no way to calculate precision and recall.

\subsection{Precision and Recall of Causal Structure: Results}
The major purpose of this work was to compare the ability of different neural network-based strategies to find causal structure in their weights.  We were also interested in the limitations of these strategies depending on the characteristics of the data.  Table \ref{table_pr_input} shows the average \textbf{input} node precision and recall results for all datasets and strategies used in this study.  Table \ref{table_pr_input} also shows the average test set error and AUROC (where appropriate) for all strategies and datasets.  This table is best thought of as showing how well the input mapped to output without indicating the exact causal structure in the weights, i.e., this metric does not indicate how the latent variables interacted to achieve the output.  The values in Table \ref{table_pr_input} (and Table \ref{table_pr_hidden}) represent the average across the ten models with the lowest $d_x$ for each dataset/strategy combination.

%%% TABLE INPUT for prec and recall
%%% TOP 10! USING CUTOFF 0 FOR EVERYTHING 
%%% old [ht]
\begin{table}[tb]
  %\begin{footnotesize} % to reduce size
  %Best values across 5 models bolded.\\
  %\vspace{2mm}
  \centering
  \resizebox{1.0\columnwidth}{!}{
  \begin{tabular}{l l c c c c c }
    \toprule
    \multicolumn{2}{c}{} & \multicolumn{3}{c}{\large Input Mapping} & \multicolumn{2}{c}{\large Test Set} \\
    % multicolumn{num columns}{alignment (l,c,r)}{contents}
    \cmidrule(l){3-5} \cmidrule(l){6-7}
    Simulated Data & Strategy & precision & recall & $F_1$ & error & AUROC \\
    \midrule
    
    %\multirow{num rows}{width}{contents}  (* = contents natural width)
    \multirow{5}{*}{(1) Matrix mult. w/ interv.} & DBN & NA & NA & NA & $0.05 \pm 0.1$ & NA \\
    & DNN & $\bm{1.00} \pm 0.0$ & $\bm{1.00} \pm 0.0$ & $\bm{1.00} \pm 0.0$ & $0.01 \pm 0.0$ & NA \\ 
    & RINN  & $\bm{1.00} \pm 0.0$ & $\bm{1.00} \pm 0.0$ & $\bm{1.00} \pm 0.0$ & $\bm{0.00} \pm 0.0$ & NA \\
    & ES-C & $0.95 \pm 0.1$ & $\bm{1.00} \pm 0.0$ & $0.97 \pm 0.0$ & $0.03 \pm 0.0$ & NA \\
    & DM & $0.31$ & $\bm{1.00}$ & $0.48$ & NA & NA \\
    %%% NA because don't know value of edges, just know whether or not they are present
    \midrule
    
    \multirow{5}{*}{(2) Linear Gaussian} & DBN & NA & NA & NA & $\bm{1.82} \pm 0.1$ & NA \\
    & DNN & $0.94 \pm 0.1$ & $0.93 \pm 0.0$ & $0.93 \pm 0.1$ & $9.31 \pm 0.2$ &  NA \\
    & RINN & $\bm{0.98} \pm 0.0$ & $\bm{1.00} \pm 0.0$ & $\bm{0.99} \pm 0.0$ & $9.19 \pm 0.1$ & NA \\
    & ES-C & $0.91 \pm 0.1$ & $0.91 \pm 0.0$ & $0.90 \pm 0.1$ & $10.12 \pm 0.2$ & NA \\
    & DM & $0.69$ & $\bm{1.00}$ & $0.81$ & NA & NA \\
    \midrule
    
    \multirow{5}{*}{(3) OR} & DBN & NA & NA & NA & $\bm{0.55} \pm 0.2$ & $0.68 \pm 0.2$ \\
    & DNN & $\bm{0.99} \pm 0.0$ & $0.98 \pm 0.1$ & $\bm{0.98} \pm 0.0$ & $0.56 \pm 0.0$ & $\bm{0.75} \pm 0.0$ \\
    & RINN & $0.96 \pm 0.1$ & $\bm{1.00} \pm 0.0$ & $\bm{0.98} \pm 0.0$ & $0.56 \pm 0.0$ & $\bm{0.75} \pm 0.0$ \\
    & ES-C & $0.81 \pm 0.2$ & $0.95 \pm 0.1$ & $0.87 \pm 0.1$ & $0.57 \pm 0.0$ & $0.74 \pm 0.0$ \\
    & DM & $0.56$ & $\bm{1.00}$ & $0.72$ & NA & NA \\
    \midrule
    
    \multirow{5}{*}{(4) OR, XOR, AND} & DBN & NA & NA & NA & $0.58 \pm 0.1$ & $0.57 \pm 0.1$ \\
    & DNN & $0.78 \pm 0.2$ & $0.99 \pm 0.0$ & $0.87 \pm 0.1$ & $0.45 \pm 0.0$ & $0.84 \pm 0.0$ \\
    & RINN & $\bm{0.92} \pm 0.1$ & $0.91 \pm 0.1$ & $\bm{0.91} \pm 0.1$ & $0.45 \pm 0.0$ & $0.84 \pm 0.0$ \\
    & ES-C & $0.46 \pm 0.2$ & $0.65 \pm 0.2$ & $0.53 \pm 0.2$ & $\bm{0.41} \pm 0.0$ & $\bm{0.87} \pm 0.0$ \\
    & DM & $0.44$ & $\bm{1.00}$ & $0.61$ & NA & NA \\
    \midrule
    
    \multirow{3}{*}{(5) TCGA+OR} & DBN & NA & NA & NA & $\bm{0.34} \pm 0.2$ & $\bm{0.88} \pm 0.1$ \\
    & DNN & $0.90 \pm 0.1$ & $0.86 \pm 0.2$ & $0.88 \pm 0.1$ & $0.42 \pm 0.0$ & $0.86 \pm 0.0$ \\
    & RINN & $\bm{0.95} \pm 0.1$ & $\bm{0.98} \pm 0.0$ & $\bm{0.96} \pm 0.1$ & $0.40 \pm 0.0$ & $0.87 \pm 0.0$ \\
    \midrule
    
    \multirow{3}{*}{(6) TCGA+OR, XOR, AND} & DBN & NA & NA & NA & $0.53 \pm 0.2$ & $0.64 \pm 0.2$ \\
    & DNN & $\bm{0.79} \pm 0.3$ & $0.90 \pm 0.3$ & $0.84 \pm 0.3$ & $\bm{0.42} \pm 0.0$ & $\bm{0.86} \pm 0.0$ \\
    & RINN & $0.78 \pm 0.1$ & $\bm{0.98} \pm 0.0$ & $\bm{0.87} \pm 0.0$ & $0.43 \pm 0.0$ & $\bm{0.86} \pm 0.0$ \\
    \bottomrule    
  \end{tabular}
  }
  \caption{Average \textbf{input} node precision and recall across ten best models. Do RINN input nodes capture known structure in their weights when mapped to output space?  Best value of the 5 strategies in bold.}
  \label{table_pr_input}
\end{table}

Table \ref{table_pr_input} shows that both DNN and RINN performed quite well at mapping input to output across all datasets, with the RINN performing a little better in terms of precision and recall.  The RINN performed noticeably better than the DNN on the Linear Gaussian and TCGA+OR datasets.  The test errors and AUROCs for DNN and RINN were almost identical.  This means that if one captures more causal structure than the other, it isn't due to one strategy being able to predict better than the other, but rather due to some other difference.  ES-C performed similar to DNN and RINN on the first three datasets, but markedly worse on the OR, XOR, AND dataset in terms of precision and recall.  ES-C had errors and AUROCs that were similar to DNN and RINN (and on Dataset 4 were better than RINN and DNN).  The DM algorithm, despite getting 100\% recall, performed very poorly on all datasets.  This is because the DM algorithm isn't identifying any false-negatives, just a lot of false-positives and maybe 1 or 2 true-positives. 

The results in Table \ref{table_pr_hidden} are the main results of this work and show how well the ground truth causal structure was captured by the \textbf{hidden} nodes of each of the strategies for each dataset.  The RINN had a higher $F_1$ score than all other strategies for most datasets, and consistently had the highest recall across all datasets and strategies.  RINN and DNN had $F_1$ scores within $0.01$ of each other on datasets 4 and 6.  The RINN performed considerably better than any other strategy on the Linear Gaussian dataset.  The Linear Gaussian dataset was also the dataset with the most noise, so it is encouraging that the RINN was able to considerably outperform all other strategies on this noisy dataset, as we suspect that most biological data has a high amount of noise.  There didn't seem to be much of a difference between continuous and binary datasets---the strategies were still able to learn causal structure from completely binary inputs and outputs (Datasets 3 and 4).  Rather, the most important characteristic of the datasets seemed to be the amount of noise.  The RINN outperformed all other strategies on the TCGA+OR dataset and performed similarity to a DNN on the TCGA+OR, XOR, AND dataset. These datasets had inputs and outputs of much higher dimensionality than any of the other simulated datasets, and shows the RINNs utility even in a higher dimensionality setting. 

ES-C performed moderately well, but was outperformed by DNN and RINN on all datasets except Linear Gaussian, where ES-C outperformed DNN.  This suggests that using a parallelized evolutionary algorithm is a direction worth pursuing for finding latent structure, as ES-C performed well despite relatively less model selection, when compared to RINN, after taking into account that ES-C has more hyperparameters to be tuned than RINN.  ES-C was also trained on significantly less data (20\%) than the other strategies to decrease the runtime of ES-C even further.  ES-C wasn't run on datasets 5 and 6 because of the very long running time it had on the small datasets, meaning that running on the much larger datasets would be impractical.  The DM algorithm and DBN performed quite poorly across all datasets, with the DM algorithm performing the worst on most datasets (note that some of the DM algorithms assumptions are violated by $G_T$).  The DM algorithm wasn't run on datasets 5 and 6 because of its poor performance on the smaller datasets.  Overall, the RINN was better able to recover the causal structure in Figure \ref{fig:ground_truth_big}a than any other strategy explored in this work. 

%%% TABLE HIDDEN for prec and recall
%%% TOP 10! USING CUTOFF 0 FOR EVERYTHING 
%%% old [ht]
\begin{table}[tb]
  %\begin{footnotesize} % to reduce size
  %Best values across 5 models bolded.
  \centering
%   \resizebox{0.85\columnwidth}{!}{
  \begin{tabular}{l l c c c}
    \toprule
    \multicolumn{2}{c}{} & \multicolumn{3}{c}{Hidden Mapping} \\
    % multicolumn{num columns}{alignment (l,c,r)}{contents}
    \cmidrule(l){3-5} 
    Simulated Data & Strategy & precision & recall & $F_1$ \\
    \midrule
    
    %\multirow{num rows}{width}{contents}  (* = contents natural width)
    \multirow{5}{*}{(1) Matrix mult. w/ interv.} & DBN  & $0.73 \pm 0.32$ & $0.61 \pm 0.22$ & $0.66 \pm 0.26$  \\ 
    & DNN & $0.97 \pm 0.07$ & $0.80 \pm 0.07$ & $0.87 \pm 0.06$  \\ 
    & RINN & $\bm{1.00} \pm 0.00$ & $\bm{0.93} \pm 0.08$ & $\bm{0.96} \pm 0.04$ \\ 
    & ES-C & $0.72 \pm 0.08$ & $0.80 \pm 0.13$ & $0.76 \pm 0.08$  \\ 
    & DM & $0.33$ & $0.29$ & $0.31$  \\
    \midrule
    
    \multirow{5}{*}{(2) Linear Gaussian} & DBN & $0.35 \pm 0.13$ & $0.57 \pm 0.17$ & $0.43 \pm 0.14$  \\
    & DNN & $0.34 \pm 0.30$ & $0.43 \pm 0.24$ & $0.37 \pm 0.26$  \\
    & RINN & $\bm{0.65} \pm 0.11$ & $\bm{0.87} \pm 0.18$ & $\bm{0.74} \pm 0.13$ \\ 
    & ES-C & $0.41 \pm 0.10$ & $0.54 \pm 0.06$ & $0.46 \pm 0.09$  \\ 
    & DM & $0.33$ & $0.29$ & $0.31$  \\
    \midrule
    
    \multirow{5}{*}{(3) OR} & DBN & $0.20 \pm 0.30$ & $0.16 \pm 0.23$ & $0.17 \pm 0.25$  \\
    & DNN & $\bm{0.88} \pm 0.17$ & $0.76 \pm 0.15$ & $0.81 \pm 0.14$ \\
    & RINN & $0.87 \pm 0.19$ & $\bm{0.83} \pm 0.13$ & $\bm{0.85} \pm 0.15$  \\
    & ES-C & $0.71 \pm 0.22$ & $0.57 \pm 0.18$ & $0.63 \pm 0.19$  \\ 
    & DM & $0.33$ & $0.29$ & $0.31$  \\
    \midrule

    \multirow{5}{*}{(4) OR, XOR, AND} & DBN & $0.38 \pm 0.22$ & $0.33 \pm 0.19$ & $0.33 \pm 0.17$  \\
    & DNN & $\bm{0.87} \pm 0.18$ & $0.79 \pm 0.17$ & $\bm{0.82} \pm 0.16$  \\
    & RINN & $0.81 \pm 0.17$ & $\bm{0.83} \pm 0.11$ & $0.81 \pm 0.11$  \\
    & ES-C & $0.44 \pm 0.09$ & $0.74 \pm 0.16$ & $0.55 \pm 0.10$  \\ 
    & DM & $0.29$ & $0.29$ & $0.29$   \\
    \midrule
    
    \multirow{3}{*}{(5) TCGA+OR} & DBN & $0.08 \pm 0.17$ & $0.06 \pm 0.14$ & $0.08 \pm 0.15$ \\
    & DNN & $0.29 \pm 0.17$ & $0.59 \pm 0.26$ & $0.33 \pm 0.12$ \\
    & RINN & $\bm{0.37} \pm 0.25$ & $\bm{0.63} \pm 0.24$ & $\bm{0.41} \pm 0.19$ \\
    \midrule
    
    \multirow{3}{*}{(6) TCGA+OR, XOR, AND} & DBN & $0.04 \pm 0.08$ & $0.16 \pm 0.22$ & $0.07 \pm 0.12$ \\ 
    & DNN & $\bm{0.57} \pm 0.26$ & $0.74 \pm 0.16$ & $\bm{0.61} \pm 0.20$ \\
    & RINN & $0.53 \pm 0.24$ & $\bm{0.80} \pm 0.10$ & $0.60 \pm 0.17$ \\
    \bottomrule
    
  \end{tabular}
%   }
  \caption{Average \textbf{hidden} node precision and recall across ten best models. Do RINN hidden nodes capture known structure in their weights when mapped to output space?  Best value of the 5 strategies in bold.}
  \label{table_pr_hidden}
\end{table}

\section{Discussion}
The results presented here show that deep learning models (RINN and DNN with $L_1$ weight regularization) can learn latent variables that capture, with high accuracy (precision and recall), the compositional structure generated by causal relationships in simulation experiments.  In doing so, the models are capable of correctly ordering the hierarchical relationships among latent variables that encode the signal of the data-generating process, i.e., the causal relationships among latent variables.  The RINN performed better than the DNN (and all other algorithms) across almost all metrics measured for recovery of the ground truth hidden nodes.  By more accurately capturing the statistical structure of the data-generating process, the RINN can better recover the causal relationships among the latent variables.  These results indicate that allowing inputs to directly access latent variables encouraged the latent variables to capture salient statistical structure connecting input and output variables.  It has not escaped our notice that the RINN architecture simplifies the interpretability of a deep learning model.  The architecture of the RINN allows hidden nodes to be more easily mapped to a set of input variables than a DNN, as inputs in a RINN are \textit{directly} connected to each hidden node.  A set of input variables can then be used to represent a hidden node, and the hidden nodes become partially interpretable.  That is, one can interpret that a latent variable encodes the signal of specific input variables.  These results illustrate two main characteristics (advances) brought forward by the RINN model: the capability of learning a partially interpretable deep learning model and the capability of learning causal relationships.  

We conjecture that the capability of the RINN model to learn causal structure lies in its capability of mimicking the data-generating process.  Here we employed a set of hierarchically organized latent variables to mimic the data-generating process.  We hypothesized that the most efficient way for a model to accurately capture the relationships between input and output is to force latent variables to mimic the data-generating process, and we applied $L_1$ penalization (i.e., following Occam's Razor or the minimal description length principle \citep{rissanen1978modeling}) to maximize the amount of information encoded by each latent variable with respect to both input and output variables.  Interestingly, both RINN and DNN were capable of learning the data-generating process.  Furthermore, if the data-generating process involves a series of causal relationships that lead to compositional statistical structure, the model can naturally capture such information.  Hence, we hypothesize that if a deep learning model is initially set up with an architecture that is sufficient to enclose the causal structures of the data generating process, then one can employ a parameter-searching approach to learn the causal structure among latent variables, in contrast to an explicit search in structure space.  In terms of complexity, searching through the parameter space of a deep learning model can be done with polynomial complexity, whereas searching through the structure space of a set of variables is super-exponential, and greedy algorithms are often used to reduce the complexity.

There are some limitations relevant to this work.  When interpreting the weights of a RINN (or DNN) as a causal graph, it is necessary to look at the values of all the weights in a trained network and, based on a threshold, decide which weights are significant (i.e., edges in a graph) and which are not.  Using $L_1$ regularization causes the network to set most of the values in the weight matrices to small numbers, but in our experiments very few of the weights are actually zero---rather there are many weights with values in the interval $[0.0001, 0.5]$.  This thresholding problem can be improved by using non-differentiable optimization procedures  that constrain the allowed values for weights in a neural network (e.g., ES-C).  Another limitation is that calculating precision and recall did not explicitly take into account the ordering of the nodes.  Anecdotally, after looking through hundreds of trained networks, we very rarely observed incorrect ordering of causal structure.  Using $L_1$ regularization encourages the network to learn nodes in the correct order and hierarchy, as learning $G_T$ represents the weight structure with the lowest number of edges.

The evolutionary strategy explored here provided promising results despite limited model selection and data, and more sophisticated versions are worth exploring in future experiments.  Modifying the RINN by using differential regularization and faster evolutionary algorithms, as well as extending the causal framework developed here could lead to improvements in the interpretability of deep neural networks and in the use of deep learning models for the discovery of latent causal structure.

% Acknowledgements should go at the end, before appendices and references

\acks{Funding for this work was provided by NIH grant R01 LM012011. Author BA was supported by training grant T15LM007059 from the NLM.} 

% Manual newpage inserted to improve layout of sample file - not
% needed in general before appendices/bibliography.

\newpage

\appendix
\section*{Appendix A. Model Selection}

% Note: in this sample, the section number is hard-coded in. Following
% proper LaTeX conventions, it should properly be coded as a reference:

%\citep{bengio2012practical,hinton2012practical} 
\label{appendix_ms}
For the deep learning strategies, each simulated dataset was separated into training, validation, and test sets according to the following ratio 0.75/0.15/0.10, respectively. We wanted to avoid the overfitting that often occurs if using a single validation dataset, so using the 90\% (training + validation) split, we randomly selected 15\% for one validation set (while training on the 75\% remaining data) and a different 15\% for the other validation set (while again training on the remaining 75\%), for a total of two validation sets.  We used a combined random and grid search approach \citep{bengio2012practical, hinton2012practical} to select the hyperparameters (learning rate, regularization rate, size of hidden layer, etc.) for the best models with the main objective of finding the optimal balance between sparsity and prediction error to encourage the discovery of latent causal structure. See Figure \ref{fig:ms} for example model selection results. 

Our search through hyperparameter space varied somewhat depending on the data and the deep learning strategy being evaluated.  Datasets 5 and 6 were considerably larger than the other simulated datasets as they contained a large amount of real TCGA data.  This increased the number of hyperparameters to be evaluated (i.e., more hidden layer sizes to search through because the input was larger), so we performed model selection on 100 more sets of hyperparameters for these larger datasets. Also, DNNs, DBNs, and ES-C have additional hyperparameters to search through relative to the RINN.  For DNNs and DBNs, we need to find the optimal number of hidden layers (whereas the structure of the RINN allows one to largely avoid determining the ideal number of hidden layers---Section \ref{sec:strategies_rinn}).  For simplicity, we assumed all DNN hidden layers to be of the same size.  However, DBN model selection uses the decreasing nature of the hidden layers as a driving force to learn the most salient aspects of the data.  Therefore, DBN model selection included searching over different sizes of the hidden layers for each of the hidden layers.  This means more hyperparameters to search through.  The additional hyperparameters for ES-C included, elite ratio, mutation rate, and set of legal weight values.

For binary outputs, binary cross-entropy plus $L_1$ regularization was used as the objective function.  For non-binary outputs, mean squared error (MSE) plus $L_1$ regularization was used.  Almost all of the best performing models used ReLU (Rectified Linear Unit) activation functions, although we performed model selection using sigmoid and tanh activation functions as well.

A difference between the RINN and a DNN in this paper is that the RINN was always trained with eight hidden layers.  We hypothesize (for eventually using this in a biological setting) that most biological pathways will not have more than an 8-level hierarchy.  Importantly, since copies of the input are available to use at each hidden layer (should the algorithm decide to use it), we don't need to determine the best number of hidden layers (as a model selection hyperparameter)---this means less model selection is required for RINNs.  If the optimal way to learn the structure in the data with a RINN is by using a 3-hidden layer network, the RINN will learn nonzero weights starting with the copy of the input that is connected to hidden layer 6.  This would create a 3-hidden layer network (hidden layers 6, 7, and 8), with all weights in the network before the redundant input connected to hidden layer 6 being equal to 0.0.  This happens because of $L_1$ regularization, which constrains the RINN to map input to output using as few weights as possible.  If there is reason to think that there may be more hierarchical levels in a particular dataset, the number of maximum hidden layers can always be increased with a RINN or treated as a hyperparameter and tuned using model selection.

For ES-C, we performed model selection over multiple hyperparameters including, elite ratio (percent of population saved to mate and produce the next generation), mutation rate, regularization rate, and the legal weight values.  We used a population size of 200 and evolved for up to 13,000 generations or epochs.  The \textit{MATE} function involved choosing two individuals in the elite population and randomly combining 50\% of the weights from one individual with 50\% of the weights from the other individual (Algorithm  \ref{alg:ga}).  The evolutionary strategy used in this work was not set up to be parallelized across processors, as early prototyping experiments suggested that ES-C would not perform nearly as well as RINN.  This, unfortunately, reduced the amount of model selection we could perform in a timely manner.  To speed up ES-C, we only trained on 20\% of the data that was used to train the other algorithms.  Testing 500 sets of hyperparameters on one dataset took approximately three weeks on a single desktop computer.  See Table \ref{table_ms} for a breakdown of the number of sets of hyperparameters that were evaluated for each strategy. 

%%% Model Selection Table
\begin{table}[tb]
  \centering
%   \resizebox{0.75\columnwidth}{!}{
  \begin{tabular}{ l c c }
    Strategy & Datasets 1,2,3,4 & Datasets 5,6 \\
    \hline
    RINN & 440 & 540  \\
    DNN & 732 & 832  \\
    DBN & 940 & 1040  \\
    ES-C & 500 & NA \\
    \hline
  \end{tabular}
%   }
\caption{Number of sets of hyperparameters evaluated for each deep learning strategy.}
 \label{table_ms}
\end{table}

%%% FIGURE 3 Model Selection
\begin{figure}[tb]
\centering 
\includegraphics[scale=0.92]{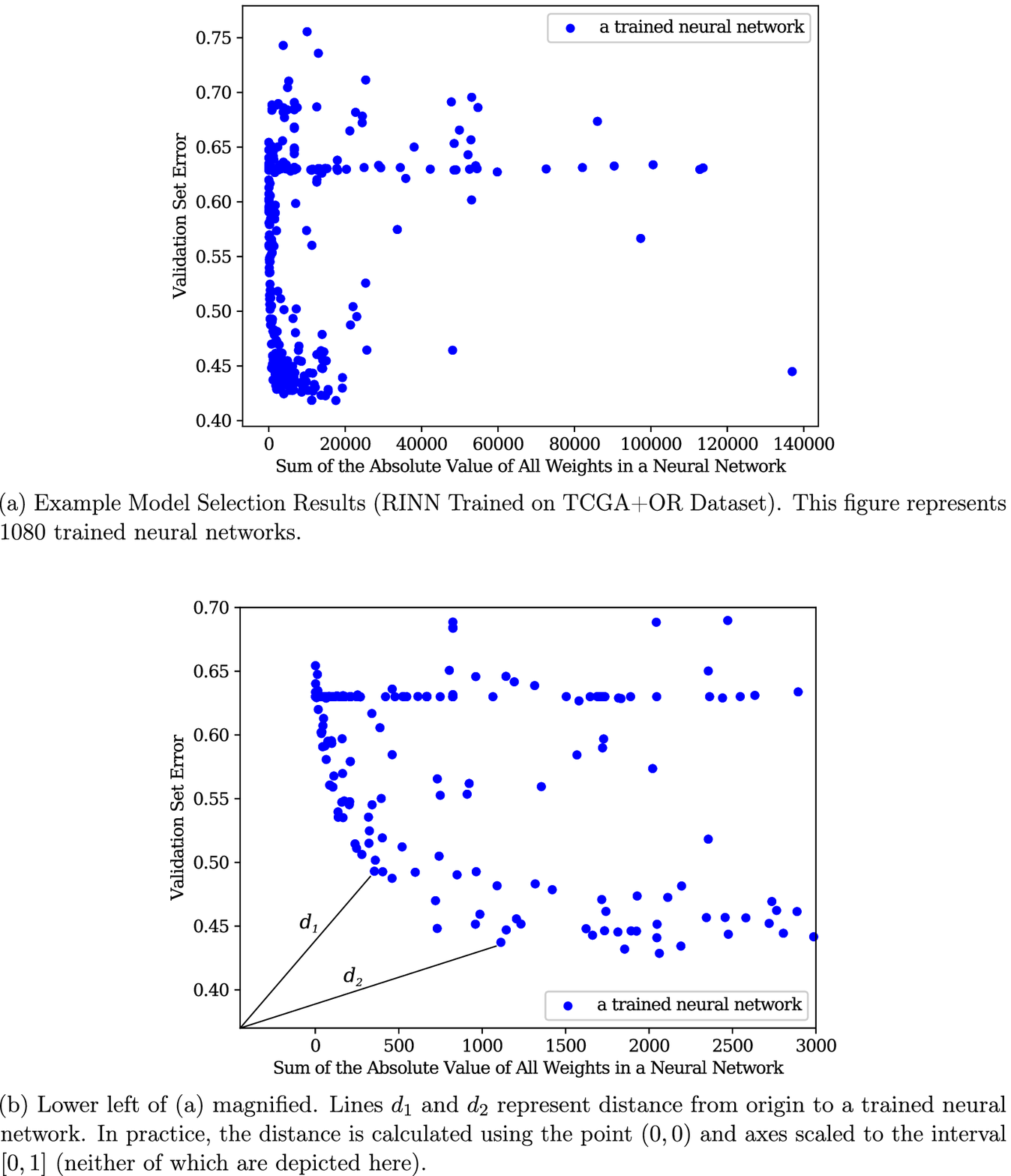}
\caption{Measuring euclidean distance to rank model selection results.}
\label{fig:ms}
\end{figure}

Figure \ref{fig:ms} shows model selection results for RINNs on simulated Dataset 5. These are typical model selection results across all deep learning strategies and datasets. Figure \ref{fig:ms}a shows the validation set error and the sum of the absolute value of all weights in a network for 1,080 (540 times two validation sets) fully trained RINNs. There is a concentration of networks (shown as blue circles) in the lower left corner of this figure. It is within these models that we hypothesize one will find the models with the highest probability of capturing the causal structure of the data within their weights. Figure \ref{fig:ms}b shows a magnification of the lower left of Figure \ref{fig:ms}a, and here it becomes apparent that model selection based solely on prediction error or sparsity (sum of the absolute values of the weights) is inadequate. We also don't know, 1. Will deep learning strategies capture causal relationships in their weights? and 2. If they do capture this structure in their weights, how can we identify them during model selection (i.e., Where would these models be located in the plots in Figure \ref{fig:ms}?)? We hypothesized that the models with best chances of capturing causal relationships in their weights will be the models that optimally balance both prediction error and sparsity (i.e., the models in bottom left of Figure \ref{fig:ms}a). To quantify this hypothesis and provide a ranking of the most optimal models, we measured the distance between each model's point location in the plot in Figure \ref{fig:ms}b and the origin, and used this as a metric to rank the models (Section \ref{sec:methods_distance} for more details). We hypothesize that the models closer to the origin (using the distance metric described above) would better capture the ground truth causal structure in their weights.

\section*{Appendix B. DM Algorithm}
Sober's criterion is used to determine the presence of latent variables between inputs and outputs (i.e., causes and effects) when inputs are known to cause outputs.  If two outputs are independent of one another when conditioned on their corresponding causal inputs, then there is no latent variable between these inputs and outputs.  If not conditionally independent, then there is a latent variable present \citep{murray2014dm}.  

The assumptions required by the DM algorithm are somewhat similar to the assumptions required for the RINN.  There are a few differences between the assumptions made by each algorithm. The DM algorithm requires each latent variable to have have at least one observed cause and one observed effect; however, the RINN can represent a latent variable without an observed cause through the use of a nonzero bias node. The DM algorithm also assumes that there is only one directed path between any two observed variables, meaning that each output variable has only one directed edge into it. \textit{This is a very limiting assumption for our purposes and the RINN algorithm does not assume this}. Biologically, this assumption is violated as there is often redundancy in cellular signaling pathways and many ways to activate a given protein (e.g., transcription factor). The DM algorithm is also dependent on conditional independence tests, which have additional hyperparameters that need to be set. In general, the DM algorithm has more constraints than the RINN algorithm. However, the correctness of the RINN is dependent upon reaching the global optimum, which is most often not the case when performing gradient descent to optimize neural networks.  Another major difference between the RINN and the DM algorithm is that the DM algorithm only returns a causal structure (without parameterization), whereas the RINN returns a causal structure and parameterization. The DM algorithm is available in Tetrad (http://www.phil.cmu.edu/tetrad/, https://github.com/cmu-phil/tetrad).

\section*{Appendix C. Pseudocode}

\begin{algorithm}
\small
\caption{Simulate Data using Matrix Multiplication and Interventions}
\label{alg:methods_matrix_mult}
\begin{algorithmic}
\State \textit{p} is the Bernoulli success probability \Comment probability that mutation is present
\State \textit{n} is the number of samples we want to generate
\State $\bm{Data}$ is a container for the simulated data
\State $\bm{W}_l$ is the $G_T$ weight matrix between hidden layer $l-1$ and $l$
\State $\bm{h}_l$ is the vector of values representing hidden layer $l$
\State $\bm{W}_x$ is a ground truth matrix with all input node adjacencies
\State
\State $p = 0.10$
\For{$i=1$ to $n$}
	\State SAMPLE $\bm{x} \sim \mathcal{B}(16, p)$ \Comment Bernoulli Trials
    \State $\bm{h}_1 = [1,1,1,1]^\top$
    \For{$j=1$ to $2$}
    	\State set nodes in $\bm{h}_j$ targeted by $\bm{x}$ to $0$ \Comment use $\bm{W}_x$
        \State $\bm{h}_{j+1} = \bm{h}_j \bm{W}_{j+1}$
    \EndFor
    \State SAVE $\bm{x}$ and $\bm{h}_{j+1}$ in $\bm{Data}$
\EndFor
\State $return$ $\bm{Data}$
\end{algorithmic}
\end{algorithm}

\begin{algorithm}
\small
\caption{Simulate Data from AND, OR, XOR Logical Operators}
\label{alg:and_or_xor}
\begin{algorithmic}
\State $G_T = (V,E)$ \Comment ground truth DAG
\State PARENTS($node$) returns the binary values of the parents of $node$
\State $\bm{X}$ is an $n \times |V|$ matrix 
\State $\bm{Data}$ is an $n \times |V|$ matrix 
\State $\bm{D}$ is an associative array or dictionary of key:value pairs, with values that are Bernoulli distributions
\State \textit{n} is the number of samples we want to generate

\State
\For{$node$ in $V$} 
    \State $op = $ RANDOM\_CHOICE($AND, OR, XOR$)
	\For{each possible binary combination, $b$, of parents of $node$}
	    \State $key=node, b$
		\If{$op(b)$ is $True$}
	        \State SAMPLE $\bm{D}[key] \sim Beta(95,5)$ \Comment add to container $\bm{D}$
        \ElsIf{$op(b)$ is $False$}
	        \State SAMPLE $\bm{D}[key] \sim Beta(5,95)$ \Comment add to container $\bm{D}$
        \EndIf
	\EndFor
\EndFor
\State
\State SAMPLE $\bm{X} \sim \mathcal{U}(0,1)$
\For {$i=1$ to $n$}
    \For {$node$ in $V$ } \Comment input (no ancestors) to output; need parents to calc. child
        \State $b = $ PARENTS($node$)
        \State $data_{i,node} = \bm{D}[node,b] < x_{i,node}$ \Comment index into $\bm{X}$ to get $x_{i,node}$
    \EndFor
\EndFor
\State $return$ $\bm{Data}$ \Comment binary matrix
\end{algorithmic}
\end{algorithm}

\newpage

\vskip 0.2in
\bibliography{bibliography}

\end{document}